\documentclass{article} % For LaTeX2e
\usepackage{iclr2026_conference,times}
\usepackage{hyperref}
\usepackage{url}

% Optional math commands from https://github.com/goodfeli/dlbook_notation.
%%%%% NEW MATH DEFINITIONS %%%%%

\usepackage{amsmath,amsfonts,bm}

% Mark sections of captions for referring to divisions of figures

% Highlight a newly defined term

% Figure reference, lower-case.
\def\figref#1{figure~\ref{#1}}
% Figure reference, capital. For start of sentence

% Table reference, upper-case.

% Table reference, lower-case.
\def\tabref#1{table~\ref{#1}}
% Section reference, lower-case.
\def\secref#1{section~\ref{#1}}
% Section reference, capital.

% Reference to two sections.

% Reference to three sections.

% Reference to an equation, lower-case.
\def\eqref#1{equation~\ref{#1}}
% Reference to an equation, upper case

% A raw reference to an equation---avoid using if possible

% Reference to a chapter, lower-case.

% Reference to an equation, upper case.

% Reference to a range of chapters

% Reference to an algorithm, lower-case.

% Reference to an algorithm, upper case.

% Reference to a part, lower case

% Reference to a part, upper case

% Reference to a proposition, lower case

% Reference to a proposition, upper case

\def\1{\bm{1}}

% Random variables

% rm is already a command, just don't name any random variables m

% Random vectors

% Elements of random vectors

% Random matrices

% Elements of random matrices

% Vectors

\def\vb{{\bm{b}}}
\def\vc{{\bm{c}}}

\def\vp{{\bm{p}}}

\def\vr{{\bm{r}}}

\def\vw{{\bm{w}}}
\def\vx{{\bm{x}}}
\def\vy{{\bm{y}}}

% Greek Vectors
\def\vtau{{\bm{\tau}}}
\def\vsigma{{\bm{\sigma}}}
\def\vvarsigma{{\bm{\varsigma}}}
\def\vpsi{{\bm{\psi}}}

% Elements of vectors

\def\evb{{b}}
\def\evc{{c}}

\def\evx{{x}}

\def\evvarsigma{{\varsigma}}

% Matrix
\def\mA{{\bm{A}}}

\def\mF{{\bm{F}}}

\def\mQ{{\bm{Q}}}
\def\mR{{\bm{R}}}

\def\mW{{\bm{W}}}

\def\mY{{\bm{Y}}}

% Tensor
\DeclareMathAlphabet{\mathsfit}{\encodingdefault}{\sfdefault}{m}{sl}
\SetMathAlphabet{\mathsfit}{bold}{\encodingdefault}{\sfdefault}{bx}{n}

% Graph

% Sets

% Don't use a set called E, because this would be the same as our symbol
% for expectation.

% Entries of a matrix

\def\emW{{W}}

\def\emY{{Y}}

% entries of a tensor
% Same font as tensor, without \bm wrapper

% The true underlying data generating distribution

% The empirical distribution defined by the training set

% The model distribution

% Stochastic autoencoder distributions

 % Laplace distribution

% Wolfram Mathworld says $L^2$ is for function spaces and $\ell^2$ is for vectors
% But then they seem to use $L^2$ for vectors throughout the site, and so does
% wikipedia.

 % See usage in notation.tex. Chosen to match Daphne's book.

\DeclareMathOperator*{\argmax}{arg\,max}
\DeclareMathOperator*{\argmin}{arg\,min}

\usepackage{xparse}
\usepackage{svg}
\usepackage{algorithm}
\usepackage{algpseudocodex}
\usepackage{amsthm}
\usepackage{enumitem}
\usepackage{geometry}
\usepackage{amsmath, amssymb, mathtools} % Essential packages for math formatting
\usepackage{bm}

\usepackage{subcaption}

% command for bold italic numbers
\newcommand{\bmit}[1]{\bm{\mathit{#1}}}

% TABLE PRINTING:
\usepackage{booktabs}          % \toprule, \midrule, \bottomrule
\usepackage{multirow}          % \multirow
\usepackage[table]{xcolor}     % \rowcolor and color support for tables
\usepackage{colortbl}          % additional table colouring utilities (optional)
\usepackage{arydshln}          % recognize ':' in column specs (dashed vertical lines)
\usepackage{pifont}            % \ding for check/x marks

% define check / cross symbols used in the table
\newcommand{\cmark}{\ding{51}} % check
\newcommand{\xmark}{\ding{55}} % cross

% if your table uses \bf (old syntax) anywhere, make old \bf behave sanely:
\DeclareOldFontCommand{\bf}{\normalfont\bfseries}{\mathbf}

% define the shading color used by the script, if not already defined in template:
\definecolor{cpgcolor}{RGB}{245,245,245} % light grey; change to taste

% END TABLE PRINTING

\allowdisplaybreaks

%-------------------------------------------------------------
%                      Own Commands
%-------------------------------------------------------------

% ALGORITHM NAME
\newcommand{\name}{\textit{S-SWIM} }

% utility

\DeclareRobustCommand{\REV}[1]{\textcolor{blue}{#1}}

% code
\newcommand{\code}[1]{\texttt{#1}}

% operators
\NewDocumentCommand \op { m g }{%
  \mathcal{#1}%
  \IfValueTF{#2}{\left\{#2\right\}}{}%
}
\NewDocumentCommand \iop { m g }{%
  \mathcal{#1}^{-1}%
  \IfValueTF{#2}{\left\{#2\right\}}{}%
}

\NewDocumentCommand \opg { m m g }{%
  #1%
  \parenth{#2\IfValueTF{#3}{,#3}{}}%
}

\NewDocumentCommand \ope { m g }{%
  \mathcal{#1}%
  \IfValueTF{#2}{\left(#2\right)}{}%
}

\NewDocumentCommand \ops { m g g g }{%
  \mathbb{#1}_{#2}^{#3}%
  \IfValueTF{#4}{\left[#4\right]}{}%
}
\NewDocumentCommand \opst { m g g g }{%
  \opt{#1}_{#2}^{#3}%
  \IfValueTF{#4}{\left[#4\right]}{}%
}
\newcommand{\Lt}[2]{L^2\parenth{#1, #2}}

\newcommand{\opt}[1]{\operatorname{#1}}

% norm
\newcommand{\norm}[1]{\left\lVert#1\right\rVert}

% ABBREVIATIONS
\newcommand{\s}{S}

\newcommand{\nl}[3]{#1^{(#3)}_{#2}}

%-------------------------------------------------------------
% math stuff -------------------------------------------------

% nice R, N, C
\newcommand{\nat}{\mathbb{N}}
\newcommand{\real}{\mathbb{R}}
\newcommand{\compl}{\mathbb{C}}

% un demi
\newcommand{\half}{\frac{1}{2}}

% parantheses
\newcommand{\parenth}[1]{ \left(#1 \right) }
\newcommand{\bracket}[1]{ \left[#1 \right] }
\newcommand{\accolade}[1]{ \left\{ #1 \right\} }
%\newcommand{\angle}[1]{ \left\langle  #1 \right\rangle }

% partial derivative: %#1 function, #2 which variable
% simple / single line version

% fraction version
\newcommand{\pardevF}[2]{ \frac{\partial #1}{\partial #2} }

% dt
\renewcommand{\d}{\mathop{}\!\mathrm{d}}

% render vectors: 3 and 4 dimensional

% render matrices: 3  dimensional (arguments in row first order)

%-------------------------------------------------------------

%-------------------------------------------------------------
% formating --------------------------------------------------

% Theorem & Co environments and counters
\newtheorem{theorem}{Theorem}[section]

\newtheorem{corollary}[theorem]{Corollary}
\newtheorem{proposition}[theorem]{Proposition}
\newtheorem{remark}[theorem]{Remark}
\newtheorem{definition}[theorem]{Definition}

\newlist{steps}{enumerate}{1}
\setlist[steps,1]{label=\textbf{(\Roman*.)},ref=\textbf{(\Roman*.)}}

% Substep environment
\newlist{substeps}{enumerate}{1}
\setlist[substeps,1]{label=\textit{(\alph*)}, leftmargin=*}

% inserting figures

\newcommand{\insertpdf}[4]{ % Filename, Caption, Label, Width percent of textwidth
	\begin{figure}[htbp]
		\begin{center}
			\includegraphics[width=#4\textwidth]{#1}
		\end{center}
		\vspace{-0.3cm}
		\caption{#2}
		\label{#3}
	\end{figure}
}

% referecing figures

\newcommand{\fig}[1]{figure~\ref{#1}}

\newcommand{\ssect}[1]{section~\ref{#1} (\nameref{#1})}
\newcommand{\ssectc}[1]{section~\ref*{#1} (\nameref*{#1})}
\newcommand{\alg}[1]{algorithm~\ref{#1}}

\newcommand{\corr}[1]{corollary~\ref{#1}}

\newcommand{\eq}[1]{equation~\ref{#1}}

\newcommand{\deft}[1]{definition~\ref{#1}}

\newcommand{\tabl}[1]{Table~\ref{#1}}

% comment that appears on the border - very practical !!!

% page clearing
\newcommand{\clearemptydoublepage}{%
  \ifthenelse{\boolean{@twoside}}{\newpage{\pagestyle{empty}\cleardoublepage}}%
  {\clearpage}}

%-------------------------------------------------------------
%-------------------------------------------------------------

%\newcommand{\citee}[3]{#1~et al.~(#2)~\cite{#3}}
\NewDocumentCommand \citee { m m m g }{%
  #1~et al.~(#2)~\IfValueTF{#4}{\cite[#4]{#3}}{\cite{#3}}%
}

% Algorithm
\algnewcommand{\Constant}{\item[\textbf{Constants:}]}
\algnewcommand{\Data}{\item[\textbf{Data:}]}
\algnewcommand{\AlgParams}{\item[\textbf{Algorithm Parameters:}]}

\newdimen{\algindent}
\setlength\algindent{1.5em}

% #1 = comment text, #2 = nesting level (0,1,2,...)
\algnewcommand\LineComment[2]{%
\Statex
\hspace{#2\algindent}\textcolor{gray}{\textit{\(\triangleright\) #1}} \hfill %
}

% Boxed eqs
\newcommand{\boxml}[2]{
\begin{equation}
	\label{eq:#1}
\begin{multlined}
    #2
\end{multlined}
\end{equation}
}

\newcommand{\tquote}[1]{``#1''}

% Graphicspath
\graphicspath{{figures/}}

\title{Random Feature Spiking Neural Networks}

% Authors must not appear in the submitted version. They should be hidden
% as long as the \iclrfinalcopy macro remains commented out below.
% Non-anonymous submissions will be rejected without review.

\author{Maximilian Gollwitzer \\
School of Computation, Information\\ and Technology\\
Technical University of Munich\\
Garching bei München, 85748, Germany \\
\And
Felix Dietrich\thanks{Corresponding author, \texttt{felix.dietrich@tum.de}.} \\
School of Computation, Information\\ and Technology\\
Technical University of Munich\\
Garching bei München, 85748, Germany \\
}

% The \author macro works with any number of authors. There are two commands
% used to separate the names and addresses of multiple authors: \And and \AND.
%
% Using \And between authors leaves it to \LaTeX{} to determine where to break
% the lines. Using \AND forces a linebreak at that point. So, if \LaTeX{}
% puts 3 of 4 authors names on the first line, and the last on the second
% line, try using \AND instead of \And before the third author name.

% === arXiv conditional switch ===
\IfFileExists{arxiv.sty}{%
  \iclrfinalcopy
  \AtBeginDocument{%
    \fancypagestyle{plain}{%
      \fancyhead{Preprint. Under Review.}%
      \fancyfoot{}%
    }%
    \pagestyle{plain} % enforce plain everywhere
    \renewcommand{\REV}[1]{##1}%
  }
}{%
  % review mode
}

%\iclrfinalcopy % Uncomment for camera-ready version, but NOT for submission.
\begin{document}

\maketitle

\begin{abstract}
Spiking Neural Networks (\textit{SNN}s) as Machine Learning (\textit{ML}) models have recently received a lot of attention as a potentially more energy-efficient alternative to conventional Artificial Neural Networks. The non-differentiability and sparsity of the spiking mechanism can make these models very difficult to train with algorithms based on propagating gradients through the spiking non-linearity. We address this problem by adapting the paradigm of Random Feature Methods (\textit{RFM}s) from Artificial Neural Networks (\textit{ANN}s) to Spike Response Model (\textit{SRM}) \textit{SNN}s. This approach allows training of \textit{SNN}s without approximation of the spike function gradient. Concretely, we propose a novel data-driven, fast, high-performance, and interpretable algorithm for end-to-end training of \textit{SNN}s inspired by the \textit{SWIM} algorithm for \textit{RFM}-\textit{ANN}s, which we coin \textit{S-SWIM}. We provide a thorough theoretical discussion and supplementary numerical experiments showing that \textit{S-SWIM} can reach high accuracies on time series forecasting as a standalone strategy and serve as an effective initialisation strategy before gradient-based training. Additional ablation studies show that our proposed method performs better than random sampling of network weights.
\end{abstract}

\section{Introduction}
Ever since their great potential as \textit{ML} models has been shown theoretically~\citep{srm_t1,srm_t2,srm_t3}, much effort has been devoted to bringing Spiking Neural Networks~\citep{maas_third} to the same degree of maturity as their artificial counterparts. They are regarded as having the potential of being more energy-efficient than conventional artificial neural networks~\citep{snn_energy} when implemented and trained properly. These efficiency gains are mainly enabled by \textit{SNN}s employing sparse computation and communication by reducing the required computation to sparsely distributed short-lived events called spikes. Many works have investigated applying the gradient-based methods, which enabled the massive success of Deep Learning to \textit{SNN}s; however, it is widely accepted that the sparse event-driven nature enabling the high efficiency of \textit{SNN}s, also makes them difficult to train the same way as \textit{ANN}s~\citep{srm_ml1, surr_grad, wu2018}. While recent works have achieved considerable progress in the direction of mitigating the \textit{SNN} specific obstacles to gradient-based training~\citep{snn_large_scale}, these methods remain subject to systematic bias introduced by surrogate-gradient approximations~\citep{surrgrad_theory} and inadequate utilisation of temporal dynamics~\citep{surrgrad_temp}, and they inherit the general shortcomings of gradient-based training: Sensitivity to optimisation hyperparameters~\citep{goodfellow_dl}, slow convergence rate~\citep{gd_conv_theory, gd_conv_pract}, substantial computational and memory costs, as well as a lack of interpretability and an incomplete theoretical characterisation of optimisation landscapes~\citep{gd_opt_land, gd_loss_new, xai}. To address these problems, recently, fast and interpretable methods for training \textit{ANN}s based on the \textit{RFM} paradigm, which can train networks orders of magnitude faster than iterative gradient-based methods while achieving competitive accuracies and also being understood much better from a theoretical point of view, have been proposed~\citep{swim, swim3}. \textit{SNN}s have thus far benefited only very little from advances in \textit{RFM}s~\citep{snn_elm,snn_elm_hw}. \textit{SNN}s are meant to be fast and efficient at inference time, but currently, the training time with \textit{SGD} prohibits efficient testing. We aim to mitigate this limitation by translating the modern high-performance \textit{SWIM}~\citep{swim} algorithm from \textit{ANN}s to \textit{SNN}s, coining the resulting algorithm \tquote{\name}.\\
\REV{
In short, current approaches for training \textit{SNN}s based on surrogate gradient approximations suffer from being computationally expensive and introducing poorly understood systematic biases. We address this problem by solving simple surrogate objectives for the trainable parameters which do not require approximating or backpropagating spike-function gradients.
}\\
In summary, our main contributions include:\\
\textbf{Data-driven Sampling Algorithm for \textit{SNN}s.} To the best of our knowledge, this study is among the first to apply the \textit{RFM} paradigm to \textit{SNN}s and the first ever to propose a data-driven sampling method for joint learning of weights and temporal parameters.\\
\textbf{Theoretical Discussion and Empirical Evaluation.} We provide a thorough theoretical discussion of the proposed method, validated by numerical experiments.\\
\textbf{High Performance Initialisation.} Our numerical experiments show that using the proposed method as an initialisation strategy for gradient-based training leads to higher performance in fewer epochs.

\section{Related Work}
\textbf{Regarding \textit{SNN}s for Time Series Forecasting}, it is commonly assumed that \textit{SNN}s are well-suited to tasks such as time series forecasting, by processing information inherently spatiotemporally. Previously, \textit{SNN}s suffered from low performance on these tasks due to the short memory of the commonly employed Integrate-and-Fire~(\textit{IF})~\citep[Chapter~1.3]{Gerstner_models} type neuron models~\citep{snn_memory}. Recent advancements in adapting the neuron models to alleviate this shortcoming by \cite{tsf_2025, tsf_reference, tsf_2024} have made \textit{SNN}s efficient and effective on these tasks, providing a solid reference we will compare against in our numerical experiments.
\textbf{Regarding Delay Learning}, recent works in \textit{SNN} training show a renewed focus on explicit delay-learning: \cite{sun_delay} extend \textit{SLAYER}~\citep{slayer} by integrating adaptive caps into the delay optimisation and show improved performance on benchmarks with rich temporal dynamics.  
\cite{delays_dcls} introduced an alternative method for parameterising and learning delays by representing them as dilated convolutions with learnable spacings~\cite{dcls}, which can easily be integrated with arbitrary neuron models and gradient-based optimisation pipelines.  
Deckers et al.~(2024)~\cite{deckers2024} demonstrate joint optimisation of weights, delays and neuron model parameters. While these methods highlight the importance of delay learning, they employ gradient-based optimisiation, which is why we instead make use of linear correlation analysis for learning delays to enable fully gradient-free training. 
\textbf{Regarding Random Feature Methods}, while classical methods such as Extreme Learning Machines~(\textit{ELM}s)~\citep{elm} can train networks much faster than gradient-based optimisation, they typically require (much) larger networks while performing worse than gradient-trained networks on sufficiently difficult tasks~\citep{rfm_lim}. A major shortcoming of \textit{ELM}s is that the weights are chosen entirely data-agnostically, which has been addressed by \cite{swim} through proposing a data-dependent weight construction and sampling distribution. The main argument is based on explicitly constructing a basis for functions matching the behaviour of the target function, focused on large gradients, thus requiring much fewer neurons than classical \textit{RFM}s to achieve competitive performance, while retaining the speed advantage. Previous work on applying \textit{RFM}s to \textit{SNN}s, such as \cite{snn_elm_hw, snn_elm, snn_rfm_model}, used only data-agnostic constructions, which distinguishes our method based on the data-driven \textit{SWIM} method.

\section{Mathematical Framework}
In this section, we introduce our main contribution, the \name algorithm. We start by defining the \REV{network architecture and spiking neuron model we want to train, before giving an intuitive overview of the training algorithm, which we then refine through a technical discussion. See \figref{fig:fig1} for a sketch of the network architecture and the main ideas behind \name.} Further details alongside mathematical derivations are provided in \secref{sec:app_der}.
\insertpdf{fig1}{
\REV{
Overview of the network architecture and the main idea of the training algorithm.
\textbf{(a)} The network consists of successive fully connected layers of \textit{SRM} neurons. Information propagates between layers only as spike trains. Depending on the task, inputs and outputs can be spike trains or real-valued functions.
\textbf{(b)} Computation of a single \textit{SRM} neuron: Incoming spikes are linearly combined and transformed into a continuous potential through convolution with the spike response kernel (\textit{SRK}) $k$ . After shifting by the bias $b$, outgoing spikes are generated through thresholding. Refractory contributions are generated by convolving the outgoing spikes with the refractory kernel (\textit{RfK}) $q$ and weighting by the spike-cost $c$.  Temporal parameters are not shown for simplicity. (cf. \deft{def:ff_snn}). \textbf{(c)} The weights of the hidden layers are chosen to separate the membrane potentials generated by samples with similar inputs and dissimilar targets (cf. sections \ref{sec:sswim_h_dist} and \ref{sec:sswim_h_weights}). The weights of the final layer are found by solving a linear problem (cf. \secref{sec:sswim_o_fit}).
}
}{fig:fig1}{0.9}
\subsection{Spike Response Model}
While most works on \textit{SNN}s \REV{favour \textit{IF}-based neuron models} for their easy integrability into existing \textit{ANN} architectures~\citep{conv_pool, res_net, transformer}, in this study, we employ the more general \textit{SRM}\REV{ - which includes the \textit{IF} model by a special choice of kernels~\citep[pp.~158--161]{neuronal_dynamics} - for its mathematical convenience. For a broad discussion of the model, see \citet{neuronal_dynamics} or \cite{slayer} and \cite{sun_delay} for use as a \textit{ML} model.}
The version of \name presented here assumes the specific, albeit very flexible, parameterisation of \textit{SRM} based feed-forward spiking neural networks defined in \deft{def:ff_snn}. The overall algorithm is, however, modular, so small adaptations can easily be incorporated into the framework by only changing the respective substep.
\begin{definition} \label{def:ff_snn} (Feed-forward Spiking Neural Network)
    Let \REV{$\mathcal{X}=\Lt{\real_0^+}{\real^{D_\text{in}}}$ be an input space and $\mathcal{Y}=\Lt{\real_0^+}{\real^{D_\text{out}}}$} be an output space. \REV{We refer to operators $\Psi:\mathcal{X}\mapsto\mathcal{Y}$ with $\opg{\Psi}{\vx}{t}=\left(\opg{\nl{\Psi}{1}{L+1}}{\vx}{t}, \opg{\nl{\Psi}{2}{L+1}}{\vx}{t}, \ldots,\opg{\nl{\Psi}{D_\text{out}}{L+1}}{\vx}{t}\right)^\top$} of the form
    \begin{equation}
        \opg{\nl{\Psi}{i}{l}}{\vx}{t}=\begin{cases}
            \evx_i(t), &\text{ for } l=0 \vspace{0.1cm} \\
            \opg{\zeta}{\sum_{j=1}^{N_{l-1}}\nl{\emW}{ij}{l}\opg{\nl{\psi}{ij}{l}}{\vx}{\cdot}+\nl{\evc}{i}{l}\opg{\nl{\eta}{i}{l}}{\vx}{\cdot}+\nl{\evb}{i}{l}}{t}, &\text{ for } 0<l\leq L \vspace{0.1cm} \\
            \sum_{j=1}^{N_{L}}\nl{\emW}{ij}{L+1}\opg{\nl{\psi}{ij}{L+1}}{\vx}{\cdot}+\nl{\evb}{i}{L+1}, &\text{ for } l=L+1
        \end{cases}
    \end{equation}
    with $i\in\{1,2,\ldots N_l\}$ for each $l$ as Feed-forward Spiking Neural Networks (\textit{FF-SNN}s) with $L$ hidden layers, where \REV{$\psi$ denotes the contributions to the membrane potential from the previous layer and $\eta$ the refractory contributions from outgoing spikes, respectively defined as } 
    \begin{align}
        \opg{\nl{\psi}{ij}{l}}{\vx}{t}&=\bracket{\left. \nl{k}{}{l}\parenth{\frac{\cdot-\nl{\tau}{i}{l}}{\nl{\sigma}{i}{l}}}\right\rvert_{\cdot\geq0}*\opg{\nl{\Psi}{j}{l-1}}{\vx}{\cdot}}(t), \\
        \opg{\nl{\eta}{i}{l}}{\vx}{t}&=\bracket{\left. \nl{q}{}{l}\parenth{\frac{\cdot}{\nl{\evvarsigma}{i}{l}}}\right\rvert_{\cdot>0}*\opg{\nl{\Psi}{i}{l}}{\vx}{\cdot}}(t)=\sum_{\REV{t^f\in \nl{\mathbb{T}}{i}{l}(t)}}\nl{q}{}{l}\parenth{\frac{t-t^f}{\nl{\evvarsigma}{i}{l}}}.
    \end{align}
    \REV{Here, $[f*g]$ denotes \textbf{convolution} and the \textbf{Spike-Response-Kernel} (\textit{SRK}) $\nl{k}{}{l}$ and \textbf{Refractory-Kernel} (\textit{RfK}) $\nl{q}{}{l}$ are arbitrary $\Lt{\real}{\real}$ functions defining the temporal shape of input and refractory contributions to the membrane potential of neurons in layer $l$}.
        \REV{$\left.f(t)\right\rvert_{t>0}=f(t) \text{ if } t > 0 \text{ and } 0 \text{ otherwise}$ (analogously for $\geq$)} denotes \textbf{half-wave rectification} \REV{to avoid dependence on future time points}.
        \REV{$\opg{\zeta}{v}{t}=\sum_{t^f\in \mathbb{T}(t)}\delta(t-t^f)$} with \REV{the past firing times $\mathbb{T}(t)=\{t^f|v(t^f)\geq1\land t^f\leq t\}$, respectively defined for each neuron,} is the \textbf{thresholding operator} mapping voltages onto spike trains.
        $\{ \nl{\mW}{}{l},\nl{\vb}{}{l},\nl{\vtau}{}{l},\nl{\vsigma}{}{l} \}_{l=1}^{L+1}\cup\{ \nl{\vc}{}{l},\nl{\vvarsigma}{}{l} \}_{l=1}^{L}$ are the \textbf{trainable parameters} of $\Psi$, referred to as \textit{Weights}, \textit{Biases}, \textit{(\REV{SRK}-)Delays}, \textit{\REV{SRK}-Supports}, \textit{Spike-Costs}, and \textit{RfK-Supports} respectively, where $\nl{\mW}{}{l}\in\real^{N_{l}\times N_{l-1}}$, and $\nl{\vb}{}{l}, \nl{\vtau}{}{l}, \nl{\vsigma}{}{l},\nl{\vc}{}{l}, \nl{\vvarsigma}{}{l}\in\real^{N_l}$. Delays and supports together form the temporal parameters of the network. $N_l$ is the number of neurons in the $l$-th layer with $N_0=D_\text{in}$ and $N_{L+1}=D_\text{out}$.
\end{definition}
\begin{remark}
    \label{rem:ff_snn}
    \deft{def:ff_snn} defines \textit{FF-SNN}s as they are used for regression tasks with real-valued inputs and outputs. \REV{For the case of spike-valued inputs, the model stays the same, but the input space $\mathcal{X}$ instead consists of spike-trains. For spike-valued outputs, the last layer ($L+1$) is defined in the same way as the interior ($0<l\leq L$) layers, and the output space $\mathcal{Y}$ consists of spike-trains.}
\end{remark}
\subsection{Random Feature Spiking Neural Networks}
\label{sec:sswim}
\REV{The key idea behind S-SWIM is to replace iterative gradient-based optimisation of the network parameters with drawing them from a probability distribution. We construct this distribution such that the resulting neurons maximally separate data points with very dissimilar target values. The sampled weights are rescaled to keep the membrane potentials in a reasonable range with respect to the spike threshold. The temporal parameters in the hidden layers are selected heuristically to integrate information from a diverse set of time-scales. Finally, the last layer's delays are found through correlation analysis, while the weights are solved for by constructing an appropriate least-squares problem. In the following, we will make these notions mathematically precise.}\\
The overall structure closely follows the \textit{SWIM} algorithm~\cite[Algorithm 1]{swim}. The main differences in the overall outline are that, in addition to weights and biases, temporal parameters have to be specified\REV{, and a more involved weight construction is needed due to the data being spatio-temporal}. \name does not depend on or assume a specific discretisation in time, so we will use the continuous version in the following. An outline of the major steps of the algorithm is presented in \alg{alg:sswim}. The individual substeps will be discussed in the following paragraphs, focusing on a compact representation of the algorithm, such that it could be implemented, with a brief motivation, while deferring the detailed derivations to the Appendix.

\begin{algorithm}[p]
\caption{The \name algorithm. 
$\mathcal{L}$ is the loss function, which in our case is always the $L^2$ loss. $\epsilon$ serves to bound the sampling distribution and was always set to $\epsilon~=~10^{-6}$ in our case. $O,T,H$ depend on the specific task; for time series forecasting $T=O$, and $H$ is the prediction length. $\Psi$ is a \textit{FF-SNN} with fixed architecture that is constructed step-wise according to the algorithm by setting its parameters. \REV{$\Gamma$ is the unknown ground-truth mapping to be approximated. The inputs can be real or spike-valued, the outputs must be real-valued (either given from the task or generated surrogate voltages).} Line comments indicate where details about the specific substeps can be found.}
\label{alg:sswim}
\begin{algorithmic}[1]
\Constant $\epsilon \in \real_{>0},\ L \in \nat_{>0},\ \{N_l \in \nat_{>0}\}_{l=1}^{L+1}$ \Comment{Fixed or part of the model architecture}
\AlgParams\ \Comment{Specific to \name}\newline 
Temporal Parameters $\mathcal{T}$,\newline
 Distances $d_{\mathcal{X}_0}(\cdot, \cdot), d_{\mathcal{X}_l}(\cdot, \cdot)$, $d_{\mathcal{Y}}(\cdot, \cdot)$,\newline
  Weight Construction Criterion $\mathcal{W}$,\newline
  Normaliser \REV{$\mathcal{Z}$},\newline
  Initialisation batch size: $\tilde{M} \in \nat_{>0}$
\vspace{0.1cm}
\Data $X=\{\nl{\vx}{}{n}(t):\nl{\vx}{}{n}\in\mathcal{X}, n=1,2,\ldots, M; t\in[0, O)\}$, \newline
$Y=\{\nl{\vy}{}{n}(t):\Gamma\{\nl{\vx}{}{n}\}(t)=\nl{\vy}{}{n}(t)\in\real^{D_\text{out}}, n=1,2,\ldots, M; t\in[T, T+H)\}$
\vspace{0.1cm}
\State $X_I = \{\nl{\vx}{I}{n}\}_{n=1}^{\tilde{M}} \sim \text{Uniform}(X)$, $Y_I= \{\nl{\vy}{I}{n}:\Gamma\{\nl{\vx}{I}{n}\}=\nl{\vy}{I}{n}\}_{n=1}^{\tilde{M}}$
\LineComment{Hidden Layers}{0}
\For{$l = 1,2,\dots,L$}
    \LineComment{\ssectc{sec:sswim_h_temp}}{1}
    \State Set $\Psi:\nl{\vtau}{}{l}, \nl{\vsigma}{}{l}, \nl{\vvarsigma}{}{l}\gets\ope{T}{l, L, N_l, O, H}$ 
    \LineComment{\ssectc{sec:sswim_h_dist}}{0}
    \State $K\gets \frac{\tilde{M}^2-\tilde{M}}{2}$ \Comment{Lower triangular part of distance matrix (symmetry)}
    \State $P^{(l)} \in \mathbb{R}^{K}, \quad P^{(l)}_i \gets 0 \ \forall i$
    \For{$n=1,2,\dots,\tilde{M}-1$}
        \For{$m=1,2,\dots,n-1$} \Comment{Output of previous layer defines new input space}
            \State $\nl{\tilde{\vx}}{I}{n}, \nl{\tilde{\vx}}{I}{m} \gets \opg{\nl{\Psi}{}{l-1}}{\nl{\vx}{I}{n}}{[0, T+H)}, \ \opg{\nl{\Psi}{}{l-1}}{\nl{\vx}{I}{m}}{[0, T+H)}$
            \State $P^{(l)}_{\text{flat}(n, m)} \gets \dfrac{d_{\mathcal{\tilde{Y}}}\parenth{\nl{\vy}{}{n}, \nl{\vy}{}{m}}}{d_{\mathcal{\tilde{X}}_{l-1}}\parenth{\nl{\tilde{\vx}}{I}{n}, \nl{\tilde{\vx}}{I}{m}} + \epsilon}$\Comment{Indexing of $P^{(l)}$ is arbitrary}
        \EndFor
    \EndFor
    \LineComment{\ssectc{sec:sswim_h_weights}}{1}
    \For{$i = 1,2,\dots,N_l$}
        \State Sample $\nl{\vx}{I}{n}, \nl{\vx}{I}{m}$ from $X_I$ without replacement, probability $\propto P^{(l)}$
        \State $\vpsi_1, \vpsi_2 \gets \opg{\nl{\psi}{i,:}{l}}{\nl{\vx}{I}{n}}{[0, T+H)}, \ \opg{\nl{\psi}{i,:}{l}}{\nl{\vx}{I}{m}}{[0, T+H)}$ \Comment{Apply \textit{SRK}}
        \State $\tilde{\vw} \gets \ope{W}{\vpsi_1, \vpsi_2}$ \Comment{Candidate weight directions from separation criterion}
        \LineComment{\ssectc{sec:sswim_h_norm}}{0}
        \State $\alpha, \beta, \gamma \gets \REV{\mathcal{Z}}\parenth{\tilde{\vw}, \opg{\nl{\psi}{i,:}{l}}{X_I}{[0, T+H)}}$
        \State Set $\Psi:\nl{W}{i, :}{l}, \nl{b}{i}{l}, \nl{c}{i}{l} \gets \alpha\tilde{\vw}, \beta, \frac{\gamma}{\nl{q}{}{l}(0)}$ \Comment{Rescale to control voltage statistics}
    \EndFor
\EndFor
\LineComment{Output Layer}{0}
\LineComment{\ssectc{sec:sswim_o_tau}}{0}
\State Set $\Psi:\nl{\tau}{}{L+1}\gets\ope{C}{\opg{\nl{\Psi}{}{L}}{X_I}{[0, T+H)}, Y_I}$
\LineComment{\ssectc{sec:sswim_o_sig}}{0}
\State Set $\Psi:\nl{\sigma}{}{L+1}\gets\Sigma\parenth{\opg{\nl{\Psi}{}{L}}{X_I}{[T, T+H)}, Y_I}$
\LineComment{\ssectc{sec:sswim_o_fit}}{0}
\State Set $\Psi:\nl{\mW}{}{L+1}, \nl{\vb}{}{L+1} \gets \arg\min_{\mW,\vb} \mathcal{L}(\opg{\nl{\Psi}{}{L}}{X}{[T, T+H)}, Y)$
\State \Return $\Psi$
\end{algorithmic}
\end{algorithm}

\subsubsection{Temporal Parameters}
\label{sec:sswim_h_temp}
We will start by defining the \REV{substep} $\op{T}$, which assigns the temporal parameters in the hidden layers.
The output layer requires an interval of length $H$ within $I^\text{tot}=[0, T+H)$ containing spikes to fit the target functions $Y$, which are supported on $[T, T+H)$. To ensure the availability of past and recent information, we use the following heuristic in \name.
For the neurons in layer $l$, we assign the delays $\nl{\vtau}{}{l}$ linearly spaced over $\left[0, \frac{l}{L}\tau_{\max{}}\right)$, \REV{with appropriately chosen $\tau_{\max{}}\in\real_0^+$, so both past and recent information is available to the current layer. If $O < H$, $\tau_{\max}$ should be at least $H-O$ to guarantee a sufficiently long interval of spikes.} 
For the \textit{\REV{SRK}}-supports $\nl{\vsigma}{}{l}$, we want multiple scales to be available across different delays, so we cycle through a set of small to large values, defined by a (small) minimum and (large w.r.t. $O$) maximum value $\sigma^\text{h}_{\min}, \sigma^\text{h}_{\max}\in\real^+$, and a cycle length $N_{\sigma}^\text{h}\ll N_{l}\in\nat$. We then assign the supports linearly spaced between the bounds, repeated every $N_{\sigma}^\text{h}$ neurons. Since the \textit{RfK}s mainly serve to bound the spiking rate and the main arguments of \secref{sec:sswim_h_weights} and \secref{sec:sswim_h_norm} focus on the behaviour of the \textit{\REV{SRK}} contributions, we set $\nl{\varsigma}{i}{l}=\sigma^\text{h}_{\min}$ for all neurons, leaving more involved considerations to future work.
The \REV{temoral parameters are thus given by $\mathcal{T}\parenth{l, L, N_l, O, H} =$}
\boxml{op_t}{
    \begin{pmatrix}
    \parenth{\frac{i-1}{N_l}\cdot\frac{l\cdot \tau_{\max{}}}{L}}_{i=1}^{N_l}, & \parenth{\frac{(i - 1)\hspace{-0.5em}\mod N_{\sigma}^\text{h}}{N_{\sigma}^\text{h} - 1} (\sigma^\text{h}_{\max{}} - \sigma^\text{h}_{\min{}})+\sigma^\text{h}_{\min}}_{i=1}^{N_l}, & \parenth{\sigma^\text{h}_{\min{}}}_{i=1}^{N_l}
    \end{pmatrix}.
}

\subsubsection{Sampling Distribution}
\label{sec:sswim_h_dist}
Next, we will discuss the distribution from which pairs of input signals $\nl{x}{}{n}(t)$ are sampled before giving the construction for the associated weights. As in \textit{SWIM}, we sample according to a notion of gradients, however, the gradients are computed in spaces of multivariate functions in our case. The distribution depends on the functions $d_{\mathcal{\tilde{X}}_0}(\cdot, \cdot), d_{\mathcal{\tilde{X}}_l}(\cdot, \cdot)$, $d_{\mathcal{\tilde{Y}}}(\cdot, \cdot)$, defining a notion of distance on the respective spaces. Notably, the inputs and outputs to the network could be real- or spike-valued, while the inputs to interior ($l>1$) hidden layers are spike-valued, so distances need to be defined for both cases. The choice of distance functions is important, as the weights of the hidden layers are constructed so that they separate (cf.~\secref{sec:sswim_h_weights}) samples with similar inputs and dissimilar outputs with respect to the employed notion of distance. In other words, the distance functions need to reflect what features of the data are important, since the hidden layers are constructed to \tquote{emphasise} differences in those features. \REV{While there is some freedom in choosing suitable functions for the task, they need to fulfill symmetry, definiteness ($d(x,x)=0$), non-negativity and boundedness, so $\nl{P}{}{l}$ can be normalised to a meaningful probability. A possible framework a more formal charactersiation of possible distance functions for both the spiking and real-valued cases is given in \secref{app:sswim_h_dist}}.

\subsubsection{Sampled weights}
\label{sec:sswim_h_weights}
Next, we will discuss possible \REV{weight constructions} $\op{W}$, i.e. how the weights of a given neuron $i$ in layer $l$, $\nl{W}{i, :}{l}$, are chosen once the temporal parameters of the given layer have been initialised and input pairs for each neuron have been sampled. Recalling that the input pairs were sampled as having dissimilar outputs, it is natural to require the hidden layer to separate those samples in its output. As in \textit{SWIM}, we split the problem into first finding meaningful directions for the weights before rescaling them and applying the bias to achieve a meaningful output. In \textit{S-SWIM}, this means first finding weights that produce voltage traces ensuring the separation, which is the content of this section, and then scaling and setting the remaining parameters in the current layer to generate reasonable spike trains from those voltage traces, which will be discussed further below.
Formally, this separation can be achieved by maximising the distance (\tquote{dist}) or minimising the inner products (\tquote{dot}) between the contributions to the voltage associated to the sampled inputs in function space. Those objectives respectively lead to the \REV{constructions}
\boxml{op_w_dist}{
    \mathcal{W}^\text{dist}\parenth{\vpsi_1, \vpsi_2}= w_{\max}\accolade{
        \int_{0}^{T+H}\parenth{\vpsi_1(t) - \vpsi_2(t)}\parenth{\vpsi_1(t) - \vpsi_2(t)}^\top \d t
    },\\
      \mathcal{W}^\text{dot}\parenth{\vpsi_1, \vpsi_2}= w_{\min}\accolade{
        \half\int_{0}^{T+H}\vpsi_1(t)\vpsi_2(t)^\top + \vpsi_2(t)\vpsi_1(t)^\top \d t
      },
}
where $w_{\max}\accolade{\mA}$ and $w_{\min}\accolade{\mA}$ respectively denote the eigenvectors to the algebraically largest and smallest eigenvalue of $\mA$. A formal derivation and an intuitive explanation of the proposed weights is given in \secref{app:sswim_h_weights}.

\subsubsection{Normalisation}
\label{sec:sswim_h_norm}
\REV{The final step $\mathcal{Z}$ in the initialisation of each hidden layers is normalising the voltages to produce \tquote{reasonable} spike trains by re-scaling the sampled weights and choosing a suitable bias.} Informally, to propagate usable information to the following layers, a spike train should contain some spikes but not too many. For this, we adapt the idea of \citet{fluct_init} and enforce contraints on the statistics of the input contributions to the voltage $\nl{\vpsi}{i, n}{l}$, specifically, the (empirical) expected temporal mean $\ops{E}{v_i}{X}$ and standard deviation $\ops{S}{v_i}{X}$ approximated over the subbatch $X_I$. One possible constraint is to explicitly prescribe the target mean $\ops{E}{v_i}{X_I}\stackrel{!}{=}\mu_t$ and standard deviation $\ops{S}{v_i}{X_I}\stackrel{!}{=}s_t$ (\tquote{\textit{MS}}), where $\mu_t\in (-\infty, 1)$ and $s_t\in\real^+$ are tunable hyperparameters. Alternatively, we can prescribe a degree of fluctuation (\tquote{\textit{FL}}) by leaving the standard deviation as is and setting the bias to put the mean $r$ standard deviations away from the threshold $\theta=1$ by $\ops{E}{v_i}{X_I}\stackrel{!}{=}1 - r\ops{S}{v_i}{X_I}$ with $r\in\real^+$ as tunable parameter. Finally, to make spiking multiple times in quick succession only possible when inputs are especially well aligned with the neurons weights, a sensible choice for the spike cost is $c_i^{(l)}=-3\;\ops{S}{v_i}{X_I} / \nl{q}{}{l}(0)$. \REV{These constraints are enforced by relating them to the bias and scale of the weights, as shown in \secref{app:sswim_h_norm}. This leas to the \REV{normalisers} $\REV{\mathcal{Z}^\text{MS}_{\mu_t, s_t}}\parenth{\tilde{\vw}, \accolade{\nl{\vpsi}{i, n}{l}}_{n=1}^{\tilde{M}}} =$}
\boxml{op_n_ms}{
    \begin{pmatrix}
        \frac{s_t}{\ops{E}{n}{X_I}{\sqrt{\opst{Var}{t}{X_I}{\left\langle\tilde{\vw}, \nl{\vpsi}{i, n}{l}(t) \right\rangle_{\ell^2}}}}}, & \mu_t-s_t\frac{\ops{E}{n}{X_I}{\ops{E}{t}{X_I}{\left\langle\tilde{\vw}, \nl{\vpsi}{i, n}{l}(t) \right\rangle_{\ell^2}}}}{\ops{E}{n}{X_I}{\sqrt{\opst{Var}{t}{X_I}{\left\langle\tilde{\vw}, \nl{\vpsi}{i, n}{l}(t) \right\rangle_{\ell^2}}}}}+\Delta^{\text{SC}}, & -3\cdot s_t 
\end{pmatrix}
}
and $\REV{\mathcal{Z}^\text{FL}_{r}}\parenth{\tilde{\vw}, \accolade{\nl{\vpsi}{i, n}{l}}_{n=1}^{\tilde{M}}} = $
\boxml{op_n_fl}{
    \begin{pmatrix}
        1, & 1 - r\ops{E}{n}{X_I}{\sqrt{\opst{Var}{t}{X_I}{\left\langle\tilde{\vw}, \nl{\vpsi}{i, n}{l}(t) \right\rangle_{\ell^2}}}} - \ops{E}{n}{X_I}{\ops{E}{t}{X_I}{\left\langle\tilde{\vw}, \nl{\vpsi}{i, n}{l}(t) \right\rangle_{\ell^2}}}+\Delta^{\text{SC}}, & -3\cdot \ops{S}{v_i}{X_I}
    \end{pmatrix}.
}
The \tquote{Silence Correction} (\tquote{\textit{SC}}) term $\Delta^\text{SC}$ is chosen just large enough to guarantee that each neuron spikes at least once. Detailed derivations for the given expressions and how to choose $\Delta^\text{SC}$ can be found in \secref{app:sswim_h_norm}.

\subsubsection{Output Delays}
\label{sec:sswim_o_tau}
After the hidden layers have been initialised, that is all parameters in $\Psi^{(L)}$ have been specified, the remaining parameters $\nl{\mW}{}{L+1}, \nl{\vb}{}{L+1}, \nl{\vtau}{}{L+1}$, and $\nl{\vsigma}{}{L+1}$ have to be chosen such that the output of the network best approximates the targets $Y$, which we assume to be real-valued functions throughout the following discussion. See \secref{app:sswim_o_spikes} for a discussion of the spike-valued case. The delays $\nl{\vtau}{}{L+1}$ can be found through linear correlation analysis. Concretely, we set $\ope{C}{
        \accolade{\opg{\nl{\Psi}{}{L}}{\nl{\vx}{I}{n}}}_{n=1}^{\tilde{M}}     
    , \accolade{\nl{\vy}{I}{n}}_{n=1}^{\tilde{M}}}=$
\boxml{op_c}{
    \parenth{\argmax_{\tau \in [0, O)} \sum_{n=1}^{\tilde{M}}\sum_{j=1}^{N_L}\left| \bracket{\opg{\nl{\Psi}{j}{L}}{\nl{\vx}{I}{n}}\star \parenth{\nl{y}{i}{n}-\mu_{\nl{y}{i}{n}}(T, T+H)}} \right| - \Delta^{k}}_{i=1}^{N_{L+1}},
}
where $\bracket{f\star g}$ denotes the cross-correlation of $f$ and $g$ and $\Delta^k$ is the location where the \textit{\REV{SRK}} peaks. This criterion is quite intuitive: We select the delay at which the incoming spike trains best predict the targets. It can also be derived from explicitly prescribing the definition of $\nl{\Psi}{}{L+1}$ as ansatz for $Y$, as is done in \secref{app:sswim_o_delays}.

\subsubsection{Kernel Supports}
\label{sec:sswim_o_sig}
Next, the kernel supports $\nl{\vsigma}{}{L+1}$ need to be specified. \name reduces this to a one-dimensional optimisation problem by aggregating the delays, leading to a shared system matrix for all neurons, and a simplification of the remaining linear problem enabling a fast search over well-chosen candidate values. Concretely, for each candidate, we only need to evaluate the remaining residual norm $\vr^*_{i|\sigma_c}$ over the subbatch $X_I$ when using the optimal weights and bias, but we do not need to compute the actual parameters. Thus, we define a set of candidates $P$ and for each neuron select
\boxml{op_sig_simple}{
    \Sigma_P
    %\parenth{\accolade{\opg{\nl{\Psi}{}{L}}{\nl{x}{I}{n}}}_{n=1}^{\tilde{M}}     
    %, \accolade{\nl{y}{I}{n}}_{n=1}^{\tilde{M}}}
    = \parenth{
            \argmin_{\sigma_c\in P}
        \norm{
            \vr^*_{i|\sigma_c}}_2^2
    }_{i=1}^{N_{L+1}}=\parenth{
            \argmin_{\sigma_c\in P}
    \norm{\mathcal{D}\vy_i}_2^2 - \norm{\mQ_{\sigma^c}^\top\mathcal{D}\vy_i}_2^2
    }_{i=1}^{N_{L+1}},
}
where $\mQ_{\sigma^c}$ comes from the thin \textit{QR}-factorisation of the augmented design matrix $\mA(\sigma^c)\in\real^{(H_D\cdot\tilde{M})\times (N_L+1)}$ built from the discretised \textit{\REV{SRK}} contributions, computed with the aggregated delay and the candidate $\sigma_c$, as columns concatenated along the temporal dimension, and $\mathcal{D}\vy_i\in\real^{H_D\cdot\tilde{M}}$ denotes an arbitrary discretisation of the analogously stacked targets $\nl{\vy}{i}{n}$ for neuron $i$. For the detailed derivations and how the candidates in \name are selected, we refer to \secref{app:sswim_o_supports}.

\subsubsection{Output Weights}
\label{sec:sswim_o_fit}
The final step of the algorithm is to solve a linear problem for the output weights $\nl{\mW}{}{L+1}$ and biases $\nl{\vb}{}{L+1}$. Due to the added temporal dimension, the approach applied in the previous section over the subbatch $X_I$ can not be applied to the whole training set. The memory requirements and computational cost grow prohibitely large for any matrix factorisation based approach scaling as $\mathcal{O}\parenth{(M\cdot H_D)^3}$ directly applied to \eq{eq:sig_design}, especially since the system has to be solved independently for each neuron due to the temporal parameters. To avoid these problems, we use a generalised form of the Normal Equations derived in \secref{app:sswim_o_weights}, which can be assembled in batches and only requires a small linear system scaling only with the number of parameters per neuron to be solved. Concretely, we reformulate the problem as \begin{equation}
\real^{N_{L}+1\times N_{L}+1}\ni\sum_{n=1}^{M}{\nl{\mathcal{D}\mF}{i}{n}}^\top\nl{\mathcal{D}\mF}{i}{n}+M\lambda \textbf{Id}_{N_{L}+1}={\nl{\mathcal{D}\mF}{i}{n}}^\top \nl{\mathcal{D}\vy}{i}{n} \in\real^{N_{L}+1},
\end{equation}
where $\nl{\mathcal{D}\mF}{i}{n}\in\real^{H_D\times N_{L}+1}$ is the augmented design matrix for neuron $i$ containing the discretised \textit{\REV{SRK}} contributions $\nl{\vpsi}{i:}{L+1}$ and a column of $1$s for the bias, and $\nl{\mathcal{D}\vy}{i}{n}\in\real^{H_D}$ are the discretised targets for neuron $i$. This formulation additionally allows for a cheap search over the regularisation parameter $\lambda$, which, alongside the conditioning of the resulting system matrix, is discussed in \secref{app:sswim_o_weights}.

\section{Numerical Experiments}
\label{sec:results}
\begin{table*}[!ht]
\centering
\caption{\label{tab:results_table}Experimental results of time-series forecasting on $4$ benchmarks with various prediction lengths $6, 24, 48, 96$. \REV{The reference values are sourced from ~\cite{tsf_reference}{Table 1}. Results highlighted with shading are ours.} \tquote{Kernel} denotes the \textit{\REV{SRK}} used in the first layer. Bold font indicates the best \textit{SNN} result. Underlined results indicate \name performing at least as well as gradient-based methods. Italic font indicates that \textit{SGD} optimisation did not fully converge, i.e. the best epoch was within 30 epochs of the maximum. Results are given in the \textit{RSE} Metric, where lower is better. All results are averaged across $3$ seeds.}

\resizebox{\linewidth}{!}{
\begin{tabular}{l:c:c:cccc:cccc:cccc:cccc:c}
\toprule
\cline{1-20}
\multirow{2}{*}{Models} & \multicolumn{2}{c:}{\bf Comment} & \multicolumn{4}{c:}{\bf Metr-la ({\small$L=12$})} & \multicolumn{4}{c:}{\bf Pems-bay ({\small$L=12$})} & \multicolumn{4}{c:}{\bf Solar ({\small$L=168$})} & \multicolumn{4}{c:}{\bf Electricity ({\small $L=168$)}} & \multirow{2}{*}{\textbf{Avg.}}\\
\cline{2-19}
& Spike & Kernel & $6$ & $24$ & $48$ & $96$ & $6$ & $24$ & $48$ & $96$& $6$ & $24$ & $48$ & $96$& $6$ & $24$ & $48$ & $96$ \\

\hline \hline

\multirow{-1}{*}{Transformer w/ RoPE} & \xmark & -- & $.548$ & $.696$ & $.802$ & $.878$ & $.499$ & $.563$ & $.600$ & $.617$ & $.225$ & $.373$ & $.492$ & $.539$ & $.251$ & $.274$ & $.341$ & $.420$ & $.507$ \\
\multirow{-1}{*}{Transformer w/ Sin-PE} & \xmark & -- & $.551$ & $.704$ & $.808$ & $.895$ & $.502$ & $.558$ & $.610$ & $.618$ & $.223$ & $.377$ & $.504$ & $.545$ & $.260$ & $.277$ & $.347$ & $.425$ & $.513$ \\ \cdashline{1-20}
\multirow{-1}{*}{Spikformer w/ RoPE} & \xmark & -- & $.584$ & $.757$ & $.835$ & $.920$ & $.519$ & $.591$ & $.614$ & $.625$ & $.294$ & $.441$ & $.550$ & $.633$ & $.375$ & $.383$ & $.384$ & $.454$ & $.560$ \\
\multirow{-1}{*}{Spikformer w/ CPG-PE} & \cmark & -- & $.553$ & $.720$ & $.806$ & $.890$ & $.508$ & $.580$ & $.602$ & $.622$ & $.257$ & $.420$ & $.506$ & $.555$ & $.299$ & $.310$ & $.314$ & $.355$ & $.519$ \\ \cdashline{1-20}
\multirow{-1}{*}{Spikformer-XNOR w/ Conv-PE} & \cmark & -- & $.559$ & $.721$ & $.813$ & $.910$ & $.518$ & $.599$ & $.613$ & $.628$ & $.273$ & $.421$ & $.527$ & $.595$ & $.365$ & $.371$ & $.376$ & $.384$ & $.542$ \\
\multirow{-1}{*}{Spikformer-XNOR w/ Gray-PE} & \cmark & -- & $.546$ & $.706$ & $.806$ & $.885$ & $.506$ & $.578$ & $.597$ & $.618$ & $.257$ & $.409$ & $.507$ & $.546$ & $.276$ & $.304$ & $.320$ & $.342$ & $.513$ \\ \cdashline{1-20}
\multirow{-1}{*}{Spikformer-XNOR w/ Log-PE} & \cmark & -- & $.543$ & $.719$ & $.799$ & $.876$ & $.496$ & $.575$ & $.601$ & $.620$ & $.265$ & $.408$ & $.504$ & $.525$ & $.272$ & $.300$ & $.314$ & $.340$ & $.510$ \\
\multirow{-1}{*}{SDT-V1 w/ CPG-PE} & \cmark & -- & $.585$ & $.724$ & $.799$ & $.920$ & $.515$ & $.578$ & $.633$ & $.642$ & $.285$ & $.439$ & $.558$ & $.637$ & $.361$ & $.368$ & $.370$ & $.376$ & $.549$ \\ \cdashline{1-20}
\multirow{-1}{*}{SDT-V1 w/ Log-PE} & \cmark & -- & $.554$ & $.713$ & $.807$ & $.904$ & $.502$ & $.585$ & $.629$ & $.641$ & $.280$ & $.437$ & $.527$ & $.598$ & $.353$ & $.356$ & $.360$ & $.366$ & $.538$ \\
\multirow{-1}{*}{QKFormer w/ Conv-PE (Original)} & \cmark & -- & $.561$ & $.735$ & $.832$ & $.917$ & $.521$ & $.586$ & $.609$ & $.635$ & $.289$ & $.515$ & $.716$ & $.784$ & $.306$ & $.319$ & $.355$ & $.367$ & $.565$ \\ \cdashline{1-20}
\multirow{-1}{*}{QKFormer w/ CPG-PE} & \cmark & -- & $.536$ & $.704$ & $.803$ & $.896$ & $.503$ & $.578$ & $.589$ & $.633$ & $.285$ & $.520$ & $.581$ & $.645$ & $.266$ & $.312$ & $.315$ & $.332$ & $.531$ \\
\multirow{-1}{*}{QKFormer-XNOR w/ Gray-PE} & \cmark & -- & $.534$ & $.711$ & $.804$ & $.898$ & $.484$ & $.577$ & $.601$ & $.616$ & $.276$ & $.438$ & $.556$ & $.570$ & $.277$ & $.310$ & $.314$ & $.331$ & $.519$ \\ \cdashline{1-20}
\multirow{-1}{*}{QKFormer-XNOR w/ Log-PE} & \cmark & -- & $.535$ & $.715$ & $.805$ & $.903$ & $.482$ & $.581$ & $.585$ & $.629$ & $.274$ & $.437$ & $.515$ & $.564$ & $\mathbf{.264}$ & $\mathbf{.285}$ & $\mathbf{.296}$ & $.328$ & $.512$ \\ \cdashline{1-20}
\rowcolor{cpgcolor} \multirow{-1}{*}{SGD (Morlet)} & \cmark & Morlet & $\mathbf{.429}$ & $\mathit{.538}$ & $.615$ & $\mathit{.708}$ & $.413$ & $\bmit{.422}$ & $.544$ & $.804$ & $\mathbf{.221}$ & $.357$ & $.378$ & $.376$ & $.302$ & $.343$ & $.379$ & $.570$ & $.463$ \\
\rowcolor{cpgcolor} \multirow{-1}{*}{SGD (Hat)} & \cmark & Hat & $.432$ & $\mathbf{.528}$ & $\mathbf{.612}$ & $\mathbf{.677}$ & $.426$ & $.455$ & $.545$ & $.770$ & $.256$ & $.387$ & $.392$ & $.371$ & $.345$ & $.358$ & $.431$ & $.609$ & $.475$ \\ \cdashline{1-20}
\rowcolor{cpgcolor} \multirow{-1}{*}{SSWIM (Morlet)} & \cmark & Morlet & $.507$ & $.651$ & $.729$ & $.806$ & $.416$ & $.514$ & $.589$ & $\underline{.651}$ & $.428$ & $.371$ & $\underline{.367}$ & $\underline{.370}$ & $.378$ & $.363$ & $\underline{.364}$ & $\underline{.370}$ & $.492$ \\
\rowcolor{cpgcolor} \multirow{-1}{*}{SSWIM (Hat)} & \cmark & Hat & $.513$ & $.637$ & $.718$ & $.794$ & $.420$ & $.512$ & $.583$ & $\underline{.652}$ & $.387$ & $.369$ & $\underline{.347}$ & $\underline{.370}$ & $.429$ & $.433$ & $.437$ & $\underline{.433}$ & $.502$ \\ \cdashline{1-20}
\rowcolor{cpgcolor} \multirow{-1}{*}{SSWIM+SGD (Morlet)} & \cmark & Morlet & $\mathit{.464}$ & $\mathit{.629}$ & $\mathit{.697}$ & $\mathit{.736}$ & $\bmit{.390}$ & $.468$ & $.550$ & $\mathbf{.602}$ & $\mathit{.360}$ & $\bmit{.298}$ & $\bmit{.279}$ & $\bmit{.267}$ & $\mathit{.325}$ & $\mathit{.313}$ & $.307$ & $\bmit{.308}$ & $.437$ \\
\rowcolor{cpgcolor} \multirow{-1}{*}{SSWIM+SGD (Hat)} & \cmark & Hat & $\mathit{.476}$ & $\mathit{.580}$ & $\mathit{.647}$ & $\mathit{.720}$ & $\mathit{.405}$ & $.474$ & $\bmit{.512}$ & $.622$ & $\mathit{.306}$ & $\mathit{.317}$ & $\mathit{.296}$ & $\mathit{.288}$ & $\mathit{.362}$ & $\mathit{.340}$ & $\mathit{.340}$ & $\mathit{.328}$ & $.438$ \\
\bottomrule
\end{tabular}
}
\vspace{-4mm}
\end{table*}
\insertpdf{benchmark_time}{Training time of \name and \textit{SGD} training across datasets and prediction horizons $H$ for the conducted forecasting experiments with the $\text{Hat}$ kernel. Note that the time is given in $\log$ scale.}{fig:time}{0.85}
\insertpdf{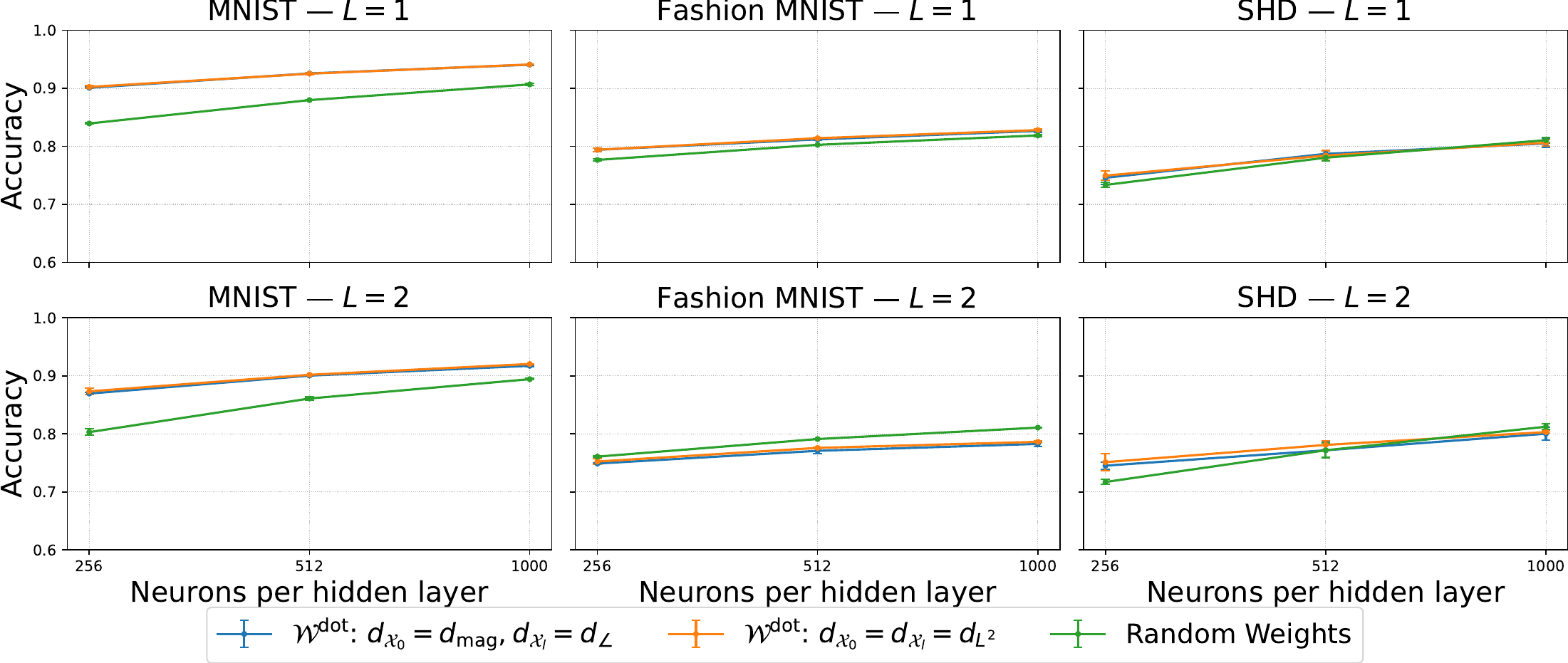}{Classification accuracy across different architectures and configurations (Number of hidden layers, number of neurons, weight constructions and sampling metrics (cf.~\secref{app:sswim_h_dist})).}{fig:class_v}{0.8}
\REV{Details on the experimental setup, the employed hardware and datasets are given in \secref{app:exp_sett}.}\\
\textbf{Time-Series Forecasting:} As the main benchmark, we evaluate the performance of the proposed algorithm for fitting networks on a time series forecasting task. We evaluate the performance of (a) training the networks only with \textit{S-SWIM}, (b) training the networks only with a stochastic gradient descent (\textit{SGD}) algorithm, specifically \textit{ADAM}~\citep{adam}, using an established surrogate-gradient strategy, and (c) fine-tuning the \name-trained networks with gradient-based training and compare the results to the current state of the art for \textit{SNN}s. The results can be found in \tabref{tab:results_table}, the corresponding fitting time in \figref{fig:time}.
With the exception of the \textbf{Metr-la} dataset, we find that \name consistently performs at or above the level of \textit{SGD}, especially over long prediction horizons. We suspect an interplay between two effects here. Namely, (a) at longer prediction horizons, the credit assignment problem in gradient-descent training naturally gets harder~\citep{rnn_vanishing}, which \name does not suffer from by being gradient-free and (b) \name seemingly underperforms at very short horizons ($H=6$ in particular). We suspect that the latter is due to the aggregation of delays in \ssect{sec:sswim_o_sig} not working short horizons. The effect is also observed for \textit{SGD}-only training ($\text{Hat}$, \textbf{Solar}, $H=48,96$), so it must be in part attributable to the model being able to exploit periodic structure in the data (\citet{solar_period}[Figure 3]) better at long horizons. Finally, the current results suggest that \name can also serve as an efficient but effective initialisation strategy for gradient-based training, however, however, more experiments in this direction are required due to different parameters and non-convergence in multiple cases.
Across different algorithms, we also observe that the choice of the employed kernel function(s) matters.
Most importantly, \name is extremely fast compared to \textit{SGD} training, consistently achieving speedups of one to three orders of magnitude~(cf.~\figref{fig:time}).
\textbf{Classification Experiments:} \REV{
We conduct additional experiments on static image and speech classification datasets, including the mutlti-layer case and problems where the targets are not real-valued functions and thus require surrogate target voltages (cf. sections \ref{app:sswim_o_spikes}), . We chose those benchmarks as they contain spikes generated from non-spiking data with no real temporal information and "real" spike data with meaningful temporal patterns. The predicted class is taken as the output layer neuron with the largest voltage integrated over time or number of spikes. The accuracy for the voltage readout is shown in \figref{fig:class_v}. For the spike-based readout, see \secref{app:class_s}. The findings show that \name can also be used for classification problems and that with the notable exception of $L=2$ on F-MNIST typically significantly outperforms data-agnostic weights, especially it never performs significantly worse. Regarding sampling deep networks, we find that while those networks still achieve acceptable results, they perform worse than their shallow counterparts, which currently constitutes a major weakness of \name. Further investigation is needed to relate this to poorly chosen hyperparameters, especially the temporal parameters or the sampling metrics or more general problems of the algorithm. It should be noted that this is a common problem of \textit{RFM}.
}

\section{Conclusion}
\textbf{Summary: }We introduced \textit{S-SWIM}, a novel data-driven algorithm for sampling the weights of \textit{SNN}s. Our approach addresses key challenges in training \textit{SNN}s by circumventing the approximation of the spike function gradient entirely, through the use of the \textit{RFM} paradigm. Through numerical experiments, we show that \name can serve as a solid standalone strategy or initialisation for gradient-based training, especially at long prediction horizons, while being orders of magnitude faster than full \textit{SGD} optimisation. Furthermore, the approach is interpretable and modular, easing future analysis, refinement and adaptation to different tasks. 
\textbf{Limitations: } \name does currently not perform well for deep networks, barring it from practical applicability. It is currently unclear, why. Several avenues for improvement remain unexplored. See \secref{app:limitations} for further discussion.

\clearpage
\section{Ethics statement}
This work contributes to the advancement of spiking neural networks, which are regarded as a potentially very energy-efficient family of machine learning models. Our main contribution is towards making the training of spiking neural networks faster, thus saving resources, and making it more interpretable. While they are broadly applicable and can thus potentially be misused, we see the benefits of reducing the immense cost of training for proper use as outweighing this risk. Especially since there are growing concerns about the resource demand of artificial neural networks for training, which is addressed by our method, and inference, which is addressed by the broader field of spiking neural networks.

\section{Reproducibility statement}
The authors have worked diligently toward ensuring the reproducibility of the results presented in this work. Specifically, we give a thorough discussion of the algorithm in \secref{sec:sswim} and \secref{sec:app_der}, enabling independent implementation and evaluation of the proposed method. A comprehensive description of the setup for the main experiments is given in \secref{app:exp_sett}.
Furthermore, the code used for this study is provided in the supplementary materials and will be made publicly available after the review period.

The datasets used for the experiments can be downloaded through an anonymised link provided as part of the supplementary materials.

\section{Usage of \textit{LLM}s \& \textit{AI} tools}
Several \textit{LLM} / \textit{AI} tools have been used throughout various stages of this work, which we will detail here.
\paragraph{\textit{AI}-assistance during code development.}
During the development of the code implementing the proposed method for the numerical experiments, \textit{LLM}s were employed to assist the process. Concretely, the models \href{https://chat.openai.com/}{ChatGPT (OpenAI)}, \href{https://claude.ai/new}{Claude (Anthropic)}, and \href{https://chat.deepseek.com/}{DeepSeek LLM (DeepSeek-AI)} were used. The major developments of the code happened between March 2025 and August 2025, during which the most recent publicly available versions of the models were used. All code outputs generated by these models were manually reviewed and, where necessary, corrected or adapted by the authors to ensure correctness and reproducibility.
\paragraph{\textit{AI}-assistance during paper writing.}
Throughout all sections of the paper, the \textit{AI} tool \href{https://www.grammarly.com/}{Grammarly (Grammarly, Inc.)} was used for correcting spelling and grammar. Some sentences were rephrased according to the suggestions of Grammarly. All suggestions made by Grammarly were reviewed and only accepted if deemed correct.

The authors acknowledge the full responsibility for the submitted text, code and presented results, especially for any mistakes contained therein.

\bibliography{main}
\bibliographystyle{iclr2026_conference}

\clearpage
\appendix
\section{Detailed Derivations}
\label{sec:app_der}
Here, we provide more details about the theoretical underpinnings of the \name algorithm.
\subsection{Sampling Distribution}
\label{app:sswim_h_dist}
\paragraph{Pseudometrics.}
We propose to use pseudometrics for sampling, that is a non-negative functions of the form $d:X\times X\mapsto\real^+_0$, for some set $X$, fulfilling $\forall x,y,z\in X$ 
\begin{align}
    \label{eq:pm_def}
    d(x,x)&=0, \\
    \label{eq:pm_symm}
d(x,y)&=d(y,x), \\
    \label{eq:pm_tr}
d(x,z)&\leq d(x,y)+d(y,z).
\end{align}
We intentionally do not require $x\neq y\implies d(x,y)>0$, as it can be useful to identify samples differing only in a way that is not informative for sampling. Below we will define the pseudometrics we evaluated during our numerical experiments, which for example identify a function with scalar multiples or phase-shifted copies of itself.

\paragraph{General Construction.}
We suggest the following general construction for informative sampling pseudometics. Let $\op{H}=\Lt{[a,b)}{\real^J}$, $\op{E}=\Lt{[a,b)}{\compl^J}$ be equipped with inner products 
$\langle x,y\rangle_{\op{H}} \;=\; \int_a^b x(t)^\top y(t)\,\d t$,
respectively, $\langle x,y\rangle_{\op{E}} = \int_a^b x(t)^H y(t)\,\d t$,
where $x(t)^H$ is the conjugate transpose of $x(t)$, and their induced norms $\norm{\cdot}_\mathcal{H},\norm{\cdot}_\mathcal{E}$. Pseudometrics on $\mathcal{H}$ can now be defined by choosing a suitable embedding $E:\mathcal{H}\mapsto\mathcal{E}$ through the construction
\begin{equation}
d_\mathcal{H}(x,y)= \|E(x)-E(y)\|_{\mathcal{E}}
\;=\;\left( \int_a^b \|E(x)(t)-E(y)(t)\|_{\mathbb{C}^J}^2 \,\d t \right)^{1/2}.
\end{equation}
\begin{proposition}
$d_\mathcal{H}$ is a pseudo-metric on $\op{H}$.
\end{proposition}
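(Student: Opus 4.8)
The plan is to observe that $d^M_\mathcal{H}$ is nothing but the pullback of the metric induced by the norm $\norm{\cdot}_\mathcal{E}$ along the embedding $E$, so all three defining properties will follow directly from the norm axioms on $\mathcal{E}$. First I would record that $\norm{\cdot}_\mathcal{E}$ is the norm induced by the inner product $\langle\cdot,\cdot\rangle_\mathcal{E}$, and therefore satisfies non-negativity, absolute homogeneity $\norm{\lambda u}_\mathcal{E}=|\lambda|\,\norm{u}_\mathcal{E}$, and the triangle inequality $\norm{u+v}_\mathcal{E}\leq\norm{u}_\mathcal{E}+\norm{v}_\mathcal{E}$. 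The last of these is the only genuinely nontrivial ingredient and is exactly Minkowski's inequality, which for an inner-product norm follows from Cauchy–Schwarz.

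Next I would verify the three conditions one by one. Non-negativity and $d(x,x)=0$ (\eqref{eq:pm_def}) are immediate, since $\norm{E(x)-E(x)}_\mathcal{E}=\norm{0}_\mathcal{E}=0$. Symmetry (\eqref{eq:pm_symm}) follows from absolute homogeneity with $\lambda=-1$, giving $\norm{E(x)-E(y)}_\mathcal{E}=\norm{-(E(y)-E(x))}_\mathcal{E}=\norm{E(y)-E(x)}_\mathcal{E}$. For the triangle inequality (\eqref{eq:pm_tr}) I would insert the telescoping decomposition $E(x)-E(z)=(E(x)-E(y))+(E(y)-E(z))$ and apply the triangle inequality for $\norm{\cdot}_\mathcal{E}$ to the two summands.

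The honest remark is that there is no real obstacle here: the statement holds for \emph{any} map $E:\mathcal{H}\to\mathcal{E}$ into a normed space, not just the specific $L^2$ embedding, and injectivity of $E$ is precisely what would be needed to upgrade the pseudometric to a metric. The only point deserving care is confirming that $\norm{\cdot}_\mathcal{E}$ really is a norm on $\mathcal{E}=\Lt{[a,b)}{\compl^J}$, i.e. checking the triangle inequality above and noting that on the space of $L^2$ equivalence classes it is a genuine norm; after that, each pseudometric axiom reduces to a one-line application of a norm axiom. This also explains, consistently with the preceding remark in the text, why $d^M_\mathcal{H}$ is in general only a pseudometric: distinct $x,y$ with $E(x)=E(y)$ yield $d^M_\mathcal{H}(x,y)=0$, which is exactly the identification of uninformative samples that we want to permit.
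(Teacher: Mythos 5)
Your proof is correct and follows essentially the same route as the paper's: the first two axioms are immediate, and the triangle inequality is obtained by inserting $E(x)-E(z)=(E(x)-E(y))+(E(y)-E(z))$ and invoking the Minkowski inequality for $\norm{\cdot}_{\mathcal{E}}$. Your additional remarks (that the argument works for any map $E$ into a normed space, and that non-injectivity of $E$ is exactly why one only gets a pseudometric) are accurate but not needed beyond what the paper does.
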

\begin{proof}
\ref{eq:pm_def}, and \ref{eq:pm_symm} are obvious. \ref{eq:pm_tr} follows from the Minkowski inequality, as
\begin{equation}
    \|E(x)-E(z)\|_{\mathcal E} = \|E(x)-E(y)+E(y)-E(z)\|_{\mathcal E} \leq \|E(x)-E(y)\|_{\mathcal E} + \|E(y)-E(z)\|_{\mathcal E}.
\end{equation}
\end{proof}
This construction generalises the notion of the canonical $L^2$ distance by only measuring distance on properties that are relevant for the purpose of constructing weights, effectively identifying samples differing only in irrelevant characteristics. This construction can naturally be extended to spike-valued inputs or outputs by first embedding the target spike-trains into $\mathcal{H}$. 
For this, we apply the idea of the \textit{van Rossum} metric on spike trains~\citep{van_rossum}. Concretely, define the set of all $D$-dimensional spike trains of arbitrary but finite length over the interval $I$, 
\begin{equation}
\mathcal{S}^D_I=\bigtimes_{d=1}^{D}\accolade{\left.\sum_{i=1}^{N}\delta(\cdot - t^f_i)\right\rvert t^f_i\in I \land N<\infty},
\end{equation}
and the induced pseudometric $d_S$ over $\mathcal{S}^D_I$ from a pseudometric $d_{f}$ over a space of real-valued functions through
\begin{equation}
    \label{eq:metr_ind}
d_S(\s_1, \s_2) = d_{f}\parenth{
    [\s_1*h], [\s_2*h]
}.
\end{equation}
The kernel $h$ is in principle arbitrary as long as a pseudometric $d^f$ can be defined. A reasonable choice in the context of \name when sampling for layer $l$ is the \textit{\REV{SRK}} $\nl{k}{}{l}$, which by finity of $S$ and integrability of $\nl{k}{}{l}$ yields $[S*\nl{k}{}{l}]\in \Lt{I}{\real^D}$ for $S\in\mathcal{S}^D_I$, where the convolution is performed per dimension.

To make the sampling invariant to constant shifts, the dimensionwise temporal mean,
\begin{equation}
    \mu_{x_i}(a,b)=\frac{1}{b-a}\int_{a}^{b}x_i(t)\d t,
\end{equation}
 is subtracted from all samples before computing the pairwise distances, which can be absorbed into the embedding in the presented framework. Furthermore, we impose a small dimensionwise minimum $L^2$ norm to each input function, setting the probability of pairs containing samples which do not fulfil this criterion to $0$, as such pairs yield trivial and thus not useful solutions for the discussed weight constructions.

 Other definitions of \tquote{distance} can of course be used instead. We chose this construction mainly because it for one is simple yet flexible, making theoretical analysis easier, and because it is informative in the sense that which embeddings $E$ produce informative gradients yields insights into the structure of the data.

\paragraph{Example Embeddings.}
Several useful embeddings that together with \eq{eq:metr_ind} can be used to construct the pseudometrics $d_{\mathcal{\tilde{X}}_0}, d_{\mathcal{\tilde{X}}_l}$, $d_{\mathcal{\tilde{Y}}}$ by appropriate choices of $a,b$ and $J$ in the above definitions and denoting the fourier transform of $f$ as $\op{F}{f}{}$ are 
\begin{itemize}
    \item $E_{L^2}=\text{Id}$, which yields the standard $L^2$ metric,
    \item $\opg{E_\text{cos}}{x} = 
    \begin{cases}
\dfrac{x}{\|x\|_{L^2}}, & x \neq 0,\\[6pt]
0, & x = 0,
\end{cases}$ which identifies real scalar multiples and thus is only sensitive to global shape (dis-)similarity,
\item $\opg{E_\text{mag}}{x}{\omega} = \bigl| \op{F}{x}(\omega) \bigr|$, which is insensitive to phase shifts,
\item $\opg{E_\angle}{x}{\omega} = \begin{cases}
\dfrac{\op{F}{x}(\omega)}{|\op{F}{x}(\omega)|}, & \op{F}{x}(\omega) \neq 0,\\[6pt]
0, & \op{F}{x}(\omega) = 0,
\end{cases}$ which only measures relative phase structure, and
\item $\opg{E_B}{x}{\omega} \;=\; \mathbf{1}_{K}(\omega) \, \op{F}{x}(\omega)$, where $\mathbf{1}_{B}$ is the indicator function, if only differences in a specific band of frequencies (say only high for fast fluctuations or only low for slow trends) are considered important.
\end{itemize}

\paragraph{Entropy Criterion.}
The pseudometrics can either by specified manually using prior knowledge or assumptions about the data, or chosen automatically based on a suitable heuristic. With $\mathcal{D}_\text{in}, \mathcal{D}_\text{out}$ being sets of candidate pseudometrics for the inputs and outputs respectively, we propose to choose a small subset $X_s$ of training samples, and select
\begin{equation}
    d_\text{in}, d_\text{out} = \argmin_{d_\text{in}\in\mathcal{D}_\text{in}, d_\text{out}\in\mathcal{D}_\text{out}}  \mathbf{H}(P_{d_\text{in}, d_\text{out}})
\end{equation}
over $X_s$, where $P_{d_\text{in}, d_\text{out}}$ is the normalised probability distribution used for sampling in \alg{alg:sswim} and computed with respect to the pseudometrics $d_\text{in}$ and $d_\text{out}$. $\mathbf{H}(p)$ denotes the Shannon entropy~\citep{shannon} defined as
\begin{equation}
    \mathbf{H}(p) =  - \sum_{a\in A} p(a) \log p(a)
\end{equation}
for a probability distribution $p$ over the elements of $A$. The base of the logarithm is arbitrary in our case. The entropy is maximised by the uniform distribution, so it can be used to measure the non-uniformity of a distribution. As explained below, the weights of the hidden layer(s) are chosen to maximise some notion of dissimilarity between the latent embeddings of sample pairs with large gradients. Thus, we want to select exactly those feature embeddings (among sensibly chosen candidates) which yield large gradients, or in other words, a natural way to separate the given dataset\footnote{Of course, degenerate cases, such as the distribution approaching a dirac mass on a single sample pair, are not desirable. This can easily be safeguarded against by selecting the minimising pair above a given minimum entropy, although we found this to not be an issue in our numerical experiments.}. The entropy criterion has several particularly nice properties:
\begin{enumerate}
    \item It has analytically available bounds from below~$(0)$ and above~$(\log |A|)$.
    \item It is an intrinsic property of the data, i.e.\ independent of any chosen network architecture.
    \item It is invariant to different ranges of values assigned by different pseudometrics, since it uses normalised probabilities.
    \item It is relatively cheap to compute even for a moderate number of candidates. Specifically, only $|\mathcal{D}_\text{in}| + |\mathcal{D}_\text{out}|$ pairwise distance computations are required, 
as the distance matrices can be reused.
\end{enumerate}

\subsection{Sampled Weights}
\label{app:sswim_h_weights}
\paragraph{Derivation of the proposed weights.}
Letting $\nl{x}{i,1}{l}, \nl{x}{i,2}{l}\in\real^{N_{l-1}}$ be the inputs sampled from $\nl{P}{}{l}$ for neuron $i$ and defining the vectors $\nl{\psi}{i, 1}{l}(t), \nl{\psi}{i, 2}{l}(t)\in\real^{N_{l-1}}$ elementwise as $\parenth{\nl{\psi}{i, n}{l}(t)}_j=\opg{\nl{\psi}{ij}{l}}{\nl{x}{i,n}{l}}{t}$, we can write the contribution to the voltage from the inputs $\nl{v}{i, n}{l}$ using the euclidean inner product $\langle\cdot,\cdot \rangle_{\ell^2}$ as $\nl{v}{i, n}{l}(t)=\left\langle\nl{W}{i, :}{l}, \nl{\psi}{i, n}{l}(t) \right\rangle_{\ell^2}$
for $n\in\{1,2\}$.
Since $\nl{v}{i, n}{l}\in\Lt{[a,b)}{\real}$, as by the assumptions $\opg{\nl{\psi}{ij}{l}}{\nl{x}{i,n}{l}}\in\Lt{[a,b)}{\real}$, we can use the $L^2$ inner product $\langle\cdot,\cdot \rangle_{L^2}$ and its induced metric $d_{L^2}\parenth{\cdot, \cdot}$ to formalise the requirement of separation in function space. The proposed objectives can then be stated as
\begin{equation}
    \label{eq:dist}
    \max_{\norm{w}=1} d_{L^2}\parenth{\nl{v}{i, 1}{l}, \nl{v}{i, 2}{l}}=\max_{\norm{w}=1}\int_{a}^{b} \parenth{
        \left\langle w, \nl{\psi}{i, 1}{l}(t) \right\rangle_{\ell^2} - \left\langle w, \nl{\psi}{i, 2}{l}(t) \right\rangle_{\ell^2}
    }^2 \d t
\end{equation}
and
\begin{equation}
    \label{eq:dot}
    \min_{\norm{w}=1}\left\langle\nl{v}{i, 1}{l}, \nl{v}{i, 2}{l}\right\rangle_{L^2}=
    \min_{\norm{w}=1}\int_{a}^{b}
        \left\langle w, \nl{\psi}{i, 1}{l}(t) \right\rangle_{\ell^2}\cdot \left\langle w, \nl{\psi}{i, 2}{l}(t) \right\rangle_{\ell^2} \d t.
\end{equation}
The bounds are respectively $a=0$ and $b=T+H$ in \alg{alg:sswim}.

\begin{proposition}
    The solution to the optimisation problem \ref{eq:dist} (\ref{eq:dot}) is given by the eigenvector $w_{\max}$ ($w_{\min}$) to the algebraically largest (smallest) eigenvalue $\lambda_{\max}$ ($\lambda_{\min}$) of a symmetric matrix $A\in\real^{N_{l-1}\times N_{l-1}}$. Further, for \ref{eq:dist}
    \begin{equation}
        \label{eq:dist_A}
    A^\text{dist}=\int_{a}^{b}\parenth{\nl{\psi}{i, 1}{l}(t) - \nl{\psi}{i, 2}{l}(t)}\parenth{\nl{\psi}{i, 1}{l}(t) - \nl{\psi}{i, 2}{l}(t)}^\top \d t\in\real^{N_{l-1}},
    \end{equation}
    and for \ref{eq:dot}
    \begin{equation}
        \label{eq:dot_A}
        A^\text{dot}=\half\int_{a}^{b}\nl{\psi}{i, 1}{l}(t)\nl{\psi}{i, 2}{l}(t)^\top + \nl{\psi}{i, 2}{l}(t)\nl{\psi}{i, 1}{l}(t)^\top \d t.
    \end{equation}
\end{proposition}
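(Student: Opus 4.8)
The plan is to rewrite each objective as a quadratic form $w^\top A w$ in the weight vector and then invoke the Rayleigh--Ritz characterisation of the extremal eigenvectors of a symmetric matrix. First I would replace the Euclidean inner products by matrix--vector products, writing $\left\langle w, \nl{\psi}{i, n}{l}(t)\right\rangle_{\ell^2} = w^\top \nl{\psi}{i, n}{l}(t)$, and note that since $w$ is independent of $t$ it can be pulled outside the time integral. The integrals defining the entries of $A$ are finite: by the assumption that each $\opg{\nl{\psi}{ij}{l}}{\nl{x}{i,n}{l}} \in \Lt{[a,b)}{\real}$, every product of two such coordinates is integrable on $[a,b)$ by Cauchy--Schwarz, so interchanging integration with the finite-dimensional quadratic form is justified.

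For the \tquote{dist} objective \ref{eq:dist}, I would set $\delta(t) = \nl{\psi}{i, 1}{l}(t) - \nl{\psi}{i, 2}{l}(t)$ and expand the squared scalar as $\parenth{w^\top \delta(t)}^2 = w^\top \delta(t)\delta(t)^\top w$. Pulling $w$ out of the integral yields exactly $w^\top A^\text{dist} w$ with $A^\text{dist} = \int_a^b \delta(t)\delta(t)^\top \d t$, which is symmetric and in fact positive semidefinite as an integral of rank-one positive semidefinite matrices. For the \tquote{dot} objective \ref{eq:dot}, I would expand $\parenth{w^\top \nl{\psi}{i, 1}{l}(t)}\parenth{w^\top \nl{\psi}{i, 2}{l}(t)} = w^\top \nl{\psi}{i, 1}{l}(t)\,\nl{\psi}{i, 2}{l}(t)^\top w$, producing the integrated but generally non-symmetric matrix $\int_a^b \nl{\psi}{i, 1}{l}(t)\,\nl{\psi}{i, 2}{l}(t)^\top \d t$.

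The one genuine subtlety --- and the step I would treat most carefully --- is that this last matrix is not symmetric, whereas Rayleigh--Ritz requires a symmetric operator. The key observation is that a quadratic form only sees the symmetric part of its matrix: since $w^\top M w$ is a scalar, $w^\top M w = w^\top M^\top w = \half w^\top (M + M^\top) w$ for every $w$. Replacing $M$ by its symmetrisation therefore leaves the objective unchanged and gives precisely $A^\text{dot} = \half \int_a^b \nl{\psi}{i, 1}{l}(t)\,\nl{\psi}{i, 2}{l}(t)^\top + \nl{\psi}{i, 2}{l}(t)\,\nl{\psi}{i, 1}{l}(t)^\top \d t$. Unlike $A^\text{dist}$, this matrix need not be positive semidefinite, so I would emphasise that the relevant extremum is the \emph{algebraically} smallest eigenvalue, which may be negative; this is exactly why the statement is phrased in terms of algebraic rather than magnitude ordering.

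Finally, with both objectives reduced to $\max_{\norm{w}=1} w^\top A w$ (respectively $\min_{\norm{w}=1} w^\top A w$) for a symmetric $A$, I would invoke the standard Rayleigh--Ritz (Courant--Fischer) theorem: the constrained maximum equals the largest eigenvalue $\lambda_{\max}$ and is attained at a corresponding unit eigenvector $w_{\max}$, and symmetrically the constrained minimum equals $\lambda_{\min}$ attained at $w_{\min}$. This simultaneously characterises the optimisers and pins down the explicit matrices $A^\text{dist}$ and $A^\text{dot}$. No step is technically deep; the only places requiring care are the symmetrisation argument in the \tquote{dot} case and the accompanying reminder that algebraic ordering of eigenvalues governs that case.
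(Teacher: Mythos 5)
Your proposal is correct and follows essentially the same route as the paper: rewrite each objective as a quadratic form, observe that the \tquote{dist} matrix is already symmetric while the \tquote{dot} matrix must be replaced by its symmetric part (which leaves the quadratic form unchanged), and then characterise the constrained extrema by the extremal eigenvectors. The only cosmetic difference is that you invoke the Rayleigh--Ritz theorem directly, whereas the paper re-derives that characterisation via a Lagrange-multiplier argument; your added remarks on positive semidefiniteness of $A^\text{dist}$ and the algebraic (possibly negative) ordering for $A^\text{dot}$ match the paper's Corollary~\ref{corr:sep} and surrounding discussion.
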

\begin{proof}
    We will start by showing that \ref{step:q_form} the solutions to
    \begin{align}
        \label{eq:q_form}
        &\max_{\norm{w}=1} w^\top Q w &\text{ and }& &\min_{\norm{w}=1} w^\top Q w 
    \end{align}
    for $Q\in\real^{n\times n}$ are respectively $w_{\max}$ and $w_{\min}$ of $\opt{sym}(Q) := \half\left(Q + Q^\top\right)$ and then \ref{step:sol} show that \ref{eq:dist} and \ref{eq:dot} can be written in the form of \eq{eq:q_form}, where $\opt{sym}(Q)$ will yield the proposed matrices.
    \begin{steps}
    \item \textit{Extrema of quadratic form}. \label{step:q_form}
    \begin{substeps}
        \item Let $\lambda$ be a lagrange multiplier. The Lagrangian for \eq{eq:q_form} is then \begin{equation}
            \mathcal{L}\parenth{w, \lambda} = w^\top Q w + \lambda \parenth{w^\top w - 1}.
        \end{equation}
        \begin{align}
            \pardevF{}{w}&\mathcal{L}\parenth{w, \lambda} = \parenth{Q + Q^\top}w + 2\lambda w\stackrel{!}{=}0 \\
            &\iff \half \parenth{Q + Q^\top}w = \opt{sym}(Q)w = \lambda w,
        \end{align}
        which we recognise as the eigenvalue equation, thus together with the norm constraint making the candidate solutions the eigenvectors with unit norm $\{w_i\}_{i=1}^n$ of $\opt{sym}(Q)$.
        \item Next, noting that \begin{align}
            w_i^\top Q w_i &= w_i^\top \parenth{\half \parenth{Q + Q^\top} + \half \parenth{Q - Q^\top} }w_i \\
            &= w_i^\top \opt{sym}(Q)w_i + \half w_i^\top Q w_i - \half w_i^\top Q^\top w_i \\
            &= w_i^\top \opt{sym}(Q)w_i = \lambda_i w_i^\top w_i = \lambda_i,
        \end{align}
        we get that the solutions to \eq{eq:q_form} are respectively $w_{\max}$ and $w_{\min}$.
    \end{substeps}

    \item \textit{\ref{eq:dist} and \ref{eq:dot} as quadratic forms}. \label{step:sol}
    \begin{substeps}
        \item For \ref{eq:dist}, we have 
        \begin{align}
            \max_{\norm{w}=1} &d_{L^2}\parenth{\nl{v}{i, 1}{l}, \nl{v}{i, 2}{l}}=
            \max_{\norm{w}=1}\int_{a}^{b} \parenth{
                \left\langle w, \nl{\psi}{i, 1}{l}(t) \right\rangle_{\ell^2} - \left\langle w, \nl{\psi}{i, 2}{l}(t) \right\rangle_{\ell^2}
            }^2 \d t \\
            &= \max_{\norm{w}=1}\int_{a}^{b} \parenth{
                \left\langle w, \nl{\psi}{i, 1}{l}(t) - \nl{\psi}{i, 2}{l}(t) \right\rangle_{\ell^2}
            }^2 \d t \\
            &= \max_{\norm{w}=1}\int_{a}^{b} \parenth{
                w^\top\parenth{\nl{\psi}{i, 1}{l}(t) - \nl{\psi}{i, 2}{l}(t)} 
            }^2 \d t \\
            &= \max_{\norm{w}=1}\int_{a}^{b}
                w^\top\parenth{\nl{\psi}{i, 1}{l}(t) - \nl{\psi}{i, 2}{l}(t)} 
                w^\top\parenth{\nl{\psi}{i, 1}{l}(t) - \nl{\psi}{i, 2}{l}(t)} 
           \d t \\
           &= \max_{\norm{w}=1}w^\top\underbrace{\int_{a}^{b}
                \parenth{\nl{\psi}{i, 1}{l}(t) - \nl{\psi}{i, 2}{l}(t)} 
                \parenth{\nl{\psi}{i, 1}{l}(t) - \nl{\psi}{i, 2}{l}(t)}^\top 
           \d t}_{Q=\opt{symm}(Q)=A^\text{dist}} w.
        \end{align}
        \item Similarly, for \ref{eq:dot}, we have 
        \begin{align}
            \min_{\norm{w}=1}&\left\langle\nl{v}{i, 1}{l}, \nl{v}{i, 2}{l}\right\rangle_{L^2}=
    \min_{\norm{w}=1}\int_{a}^{b}
        \left\langle w, \nl{\psi}{i, 1}{l}(t) \right\rangle_{\ell^2}\cdot \left\langle w, \nl{\psi}{i, 2}{l}(t) \right\rangle_{\ell^2} \d t \\
            &= \min_{\norm{w}=1}\int_{a}^{b}
                w^\top \nl{\psi}{i, 1}{l}(t) w^\top \nl{\psi}{i, 2}{l}(t)
            \d t \\
            &= \min_{\norm{w}=1} w^\top \underbrace{\int_{a}^{b}
               \nl{\psi}{i, 1}{l}(t) \nl{\psi}{i, 2}{l}(t)^\top
            \d t}_{Q} w.
        \end{align}
        Now taking the symmetric part yields the proposed matrix 
        \begin{equation}
        \opt{symm}(Q)=A^\text{dot}=\half\int_{a}^{b}\nl{\psi}{i, 1}{l}(t)\nl{\psi}{i, 2}{l}(t)^\top + \nl{\psi}{i, 2}{l}(t)\nl{\psi}{i, 1}{l}(t)^\top \d t.
        \end{equation}
    \end{substeps}
    Together with \ref{step:q_form}, this completes the proof.
    \end{steps}
\end{proof}
\begin{corollary}
    \label{corr:sep}
    As metrics are non-negative and by \ref{step:q_form} the eigenvalues are the evaluation of the metric at the candidate solutions, all eigenvalues of $A^\text{dist}$ must be non-negative, making the algebraically largest eigenvalue also the largest in magnitude.
\end{corollary}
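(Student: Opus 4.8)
The plan is to establish positive semidefiniteness of $A^\text{dist}$ by reinterpreting its eigenvalues through the variational characterisation already derived in \ref{step:q_form}, and then to read off the magnitude claim as an immediate consequence.

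First I would recall from \ref{step:q_form} that, since $A^\text{dist}$ is symmetric and hence equal to its own symmetric part, every unit-norm eigenvector $w_i$ satisfies $\lambda_i = w_i^\top A^\text{dist} w_i$. The key observation is that this quadratic form is exactly the objective of \ref{eq:dist} evaluated at $w_i$, namely the squared $L^2$ distance $d_{L^2}\parenth{\nl{v}{i, 1}{l}, \nl{v}{i, 2}{l}}^2$ between the two voltage contributions generated by $w_i$. Because a (pseudo)metric is non-negative by definition, its square is non-negative as well, so $\lambda_i \geq 0$ for every eigenvalue; equivalently, $A^\text{dist} \succeq 0$.

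Second, having shown that the spectrum is non-negative, the magnitude statement follows directly. Writing the real eigenvalues as $0 \leq \lambda_{\min} \leq \cdots \leq \lambda_{\max}$, every eigenvalue obeys $|\lambda_i| = \lambda_i \leq \lambda_{\max} = |\lambda_{\max}|$, so the algebraically largest eigenvalue is simultaneously the one of largest absolute value, as claimed.

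There is essentially no hard step here; the entire content sits in the first reinterpretation, and the only point requiring a word of care is the symmetry of $A^\text{dist}$, which (via the spectral theorem) guarantees that the eigenvalues are real so that \tquote{algebraically largest} and \tquote{largest in magnitude} are comparable notions at all. Symmetry is immediate, since $A^\text{dist}$ is an integral of outer products $\parenth{\nl{\psi}{i, 1}{l}(t) - \nl{\psi}{i, 2}{l}(t)}\parenth{\nl{\psi}{i, 1}{l}(t) - \nl{\psi}{i, 2}{l}(t)}^\top$ and therefore equals its transpose. I would note in passing that this same outer-product structure furnishes an alternative one-line proof of positive semidefiniteness, but the metric-based argument is preferred because it ties the spectral property back to the separation objective that motivated the weight construction.
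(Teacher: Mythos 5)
Your proof is correct and follows essentially the same route as the paper's own justification: the eigenvalues of $A^\text{dist}$ are the values of the (squared) $L^2$ separation objective at the unit-norm eigenvectors, hence non-negative, so the algebraically largest eigenvalue is also the largest in magnitude. Your added remarks on symmetry and the outer-product structure are accurate but not a departure from the paper's argument.
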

By \corr{corr:sep}, \ref{eq:dist} is easy to solve. While finding the algebraically smallest eigenvalue as required for solving \ref{eq:dot} is not as easy, it can also be solved very efficiently by specialised algorithms such as \textit{PRIMME}~\citep{primme}. During our numerical experiments, we found that computing all eigenvectors through the optimised \code{cupy.linalg.eigh} \textit{CuPy} routine directly on the GPU and discarding was equally as fast as transferring the data to the CPU and running \textit{PRIMME}, likely since the individual eigenproblems are rather small for moderate input dimensions and neuron counts.

\paragraph{Intuition for the resulting weight vectors.}
Finally, we will give an intuitive explanation for how the weights resulting from the two criteria relate to each other and the input signals.
Looking closely at the general structure of $A^\text{dist}$ and $A^\text{dot}$, we find that they contain the pairwise $L^2$ inner products of the dimensions of the functions they are constructed from. Letting $f, g$ be some vector-valued functions, we get
\begin{align}
    \int_{a}^{b} f(t)g(t)^\top \d t&=
\begin{bmatrix}
\int_{a}^{b} f_1(t)g_1(t) \d t & \int_{a}^{b} f_1(t)g_2(t) \d t & \cdots & \int_{a}^{b} f_1(t)g_n(t) \d t \\
\int_{a}^{b} f_2(t)g_1(t) \d t & \int_{a}^{b} f_2(t)g_2(t) \d t & \cdots & \int_{a}^{b} f_2(t)g_n(t) \d t \\
\vdots & \vdots & \ddots & \vdots \\
\int_{a}^{b} f_n(t)g_1(t) \d t & \int_{a}^{b} f_n(t)g_2(t) \d t & \cdots & \int_{a}^{b} f_n(t)g_n(t) \d t
\end{bmatrix}\\
&=
\begin{bmatrix}
\langle f_1, g_1 \rangle_{L^2} & \langle f_1, g_2 \rangle_{L^2} & \cdots & \langle f_1, g_n \rangle_{L^2} \\
\langle f_2, g_1 \rangle_{L^2} & \langle f_2, g_2 \rangle_{L^2} & \cdots & \langle f_2, g_n \rangle_{L^2} \\
\vdots & \vdots & \ddots & \vdots \\
\langle f_n, g_1 \rangle_{L^2} & \langle f_n, g_2 \rangle_{L^2} & \cdots & \langle f_n, g_n \rangle_{L^2}
\end{bmatrix}.
\end{align}
Thus, we see that $\op{W}^\text{dist}$ gives the vector which is most aligned with the dimensions where the convolved input signals differ, whereas $\op{W}^\text{dot}$ give the vector least aligned with where the convolved input signals are similar. Furthermore, restating \eq{eq:dist} as
\begin{align}
    &\max_{\norm{w}=1}\int_{a}^{b} \parenth{
                \left\langle w, \nl{\psi}{i, 1}{l}(t) \right\rangle_{\ell^2} - \left\langle w, \nl{\psi}{i, 2}{l}(t) \right\rangle_{\ell^2}
            }^2 \d t\\
    &=\max_{\norm{w}=1}\int_{a}^{b}
                \left\langle w, \nl{\psi}{i, 1}{l}(t) \right\rangle_{\ell^2}^2
                 \d t
    + \int_{a}^{b}
                \left\langle w, \nl{\psi}{i, 2}{l}(t) \right\rangle_{\ell^2}^2
                 \d t
    - 2 \int_{a}^{b}
                \left\langle w, \nl{\psi}{i, 1}{l}(t) \right\rangle_{\ell^2}\left\langle w, \nl{\psi}{i, 2}{l}(t) \right\rangle_{\ell^2}
                 \d t \\
    &=\max_{\norm{w}=1}w^\top\parenth{\underbrace{\int_{a}^{b}
                \nl{\psi}{i, 1}{l}(t)\nl{\psi}{i, 1}{l}(t)^\top 
                 \d t
    + \int_{a}^{b}
                \nl{\psi}{i, 2}{l}(t)\nl{\psi}{i, 2}{l}(t)^\top 
                 \d t}_\text{magnitudes}
    - 2 \underbrace{\int_{a}^{b}
                \nl{\psi}{i, 1}{l}(t) \nl{\psi}{i, 2}{l}(t)^\top
                 \d t}_\text{dot}} w,
\end{align}
we find that $\op{W}^\text{dist}$ is a compromise between $\op{W}^\text{dot}$ and aligning with input dimensions having high amplitudes, in other words, aligning with the dimensions where the signals are large and dissimilar.

This clear interpretability is a major advantage of \textit{S-SWIM} over gradient-based methods.

\subsection{Normalisation}
\label{app:sswim_h_norm}
\paragraph{Satisfying the constraints.}
Formally, defining the input contribution to a neuron $i$ in layer $l$ from a sample $\nl{x}{}{n}$ using the scaled candidate weights $\tilde{w}$ from the sampling step scaled by a factor $\alpha\in \real^+$ and a bias $\beta\in\real$, which remain to be specified, as 
\begin{equation}
    \nl{v}{i, n}{l}(t)=\left\langle\alpha\tilde{w}, \nl{\psi}{i, n}{l}(t) \right\rangle_{\ell^2} + \beta,
\end{equation}
where $\parenth{\nl{\psi}{i, n}{l}(t)}_j=\opg{\nl{\psi}{ij}{l}}{\nl{x}{}{n}}{t}$, the statistics of interest are the expected temporal mean and standard deviation
\begin{align}
        &\ops{E}{v_i}{X}=\ops{E}{n}{X}{\ops{E}{t}{X}{\nl{v}{i, n}{l}(t)}} &\text{ and }& &\ops{S}{v_i}{X}=\ops{E}{n}{X}{\sqrt{\opst{Var}{t}{X}{\nl{v}{i, n}{l}(t)}}}
\end{align}
over the dataset $X$, which we approximate by $\ops{E}{v_i}{X_I}\approx\ops{E}{v_i}{X}, \ops{S}{v_i}{X_I}\approx\ops{S}{v_i}{X}$ on the subset $X_I$. By applying the properties of expectation and variance, we get the two equations
\begin{align}
    \ops{E}{v_i}{X_I}&=\alpha\ops{E}{n}{X_I}{\ops{E}{t}{X_I}{\left\langle\tilde{w}, \nl{\psi}{i, n}{l}(t) \right\rangle_{\ell^2}}}+\beta\\
    \ops{S}{v_i}{X_I}&=\alpha\ops{E}{n}{X_I}{\sqrt{\opst{Var}{t}{X_I}{\left\langle\tilde{w}, \nl{\psi}{i, n}{l}(t) \right\rangle_{\ell^2}}}}
\end{align}
in the four unknowns $\alpha, \beta, \ops{E}{v_i}{X_I}$, and $\ops{S}{v_i}{X_I}$, leaving two degrees of freedom to impose the desired constraints.
For explicitly prescribing the target mean $\ops{E}{v_i}{X_I}\stackrel{!}{=}\mu_t$ and standard $\ops{S}{v_i}{X_I}\stackrel{!}{=}s_t$ (\textit{MS}), we get
\begin{align}
        &\alpha_{s_t}^\text{MS}=\frac{s_t}{\ops{E}{n}{X_I}{\sqrt{\opst{Var}{t}{X_I}{\left\langle\tilde{w}, \nl{\psi}{i, n}{l}(t) \right\rangle_{\ell^2}}}}} &\text{ and }& &\beta_{\mu_t, s_t}^\text{MS}=\mu_t-s_t\frac{\ops{E}{n}{X_I}{\ops{E}{t}{X_I}{\left\langle\tilde{w}, \nl{\psi}{i, n}{l}(t) \right\rangle_{\ell^2}}}}{\ops{E}{n}{X_I}{\sqrt{\opst{Var}{t}{X_I}{\left\langle\tilde{w}, \nl{\psi}{i, n}{l}(t) \right\rangle_{\ell^2}}}}}
\end{align}
for the remaning unknowns, where $\mu_t\in (-\infty, 1)$ and $s_t\in\real^+$ are the tunable hyperparameters.
For the \textit{FL} criterion, we fix the standard deviation 
\begin{equation}
    \ops{S}{v_i}{X_I}\stackrel{!}{=}\ops{E}{n}{X_I}{\sqrt{\opst{Var}{t}{X_I}{\left\langle\tilde{w}, \nl{\psi}{i, n}{l}(t) \right\rangle_{\ell^2}}}}
\end{equation}
and need to guarantee $\ops{E}{v_i}{X_I}\stackrel{!}{=}1 - r\ops{S}{v_i}{X_I}$ with $r\in\real^+$ as tunable parameter. This is satisfied by assigning $\alpha^\text{FL}=1$ and
\begin{align}
        \beta_{z}^\text{FL}=1 - z\ops{E}{n}{X_I}{\sqrt{\opst{Var}{t}{X_I}{\left\langle\tilde{w}, \nl{\psi}{i, n}{l}(t) \right\rangle_{\ell^2}}}} - \ops{E}{n}{X_I}{\ops{E}{t}{X_I}{\left\langle\tilde{w}, \nl{\psi}{i, n}{l}(t) \right\rangle_{\ell^2}}}
\end{align}
to the remaining quantities. The \textit{MS} criterion gives more explicit control over how much the spiking frequency is determined by input fluctuations and how much by a regular baseline, whereas the \textit{FL} criterion is fully driven by fluctuations but only has one parameter that potentially needs tuning. In both cases, the observed average voltage will of course be lower than the prescribed voltage since the above computations neglect the spike cost. These deviations will however be systematic and somewhat limited by the self-regulating nature of the spiking mechanism, making the used statistics nonetheless meaningful approximations to the actually observed voltage.

\paragraph{Silence Correction.}
Furthermore, to guarantee that every neuron actually spikes at least once (on $X_I$), $\beta$ can be post-corrected by additionally tracking the per-neuron maximum voltage
\begin{equation}
    \ops{M}{v_i}{X_I}=\opst{\max_t}{}{X_I}{\opst{\max_n}{}{X_I}{\left\langle\tilde{w}, \nl{\psi}{i, n}{l}(t) \right\rangle_{\ell^2}}}.
\end{equation}
The term correcting the bias $\Delta^{\text{SC}}$ to avoid silent (\tquote{\textit{SC}}) neurons is then given by
\begin{equation}
    \Delta^{\text{SC}} = \begin{cases}
        (1 + \varepsilon) - \ops{M}{v_i}{X_I} &\text{ for } \ops{M}{v_i}{X_I} < 1, \\
        0 &\text{ otherwise, } 
    \end{cases}
\end{equation}
where the small constant $\varepsilon\geq0$ is added for numerical robustness. This ensures that, when using the updated bias $\beta+\Delta^{\text{SC}}$,\; $\ops{\tilde{M}}{v_i}{X_I}\geq1$, meaning at least one spike will be emitted. \REV{A similar construction can be derived for $\alpha$}.

\paragraph{Computing the statistics.}
It should be noted that all three required statistics can jointly be computed in a single pass over $X_I$, even if $X_I$ does not fit into the available memory, by online algorithms such as Welford's algorithm~\citep{welford}, and using that the new maximum after normalisation will be $\ops{\tilde{M}}{v_i}{X_I}=\alpha \ops{M}{v_i}{X_I} + \beta$ (since $\alpha>0$), as thus computing the pre-normalisation maximum is sufficient.

\subsection{Spike-valued targets}
\label{app:sswim_o_spikes}
The initialisation of the hidden layers works essentially the same for real- and spike-valued targets. Throughout the discussion of the parameters in the output layer, we assume that $Y$ is comprised of real-valued functions, which we prescribe as the target voltages for that layer. In the case of spike-valued outputs, surrogate voltages need to be generated for each neuron $i$ that could produce the target spike trains and lie in the linear span of $\{\nl{\psi}{ij}{l}\}_{j=1}^{N_L}\cup \{1, \nl{\eta}{i}{l} \}$ respectively applied to the associated inputs and target trains. The constraints for exactly producing a given spike-train are given by $v^\text{surr}(t^f)=1$ for all firing times $f^f$ in the target train and $v^\text{surr}(t)<1$ otherwise. It is likely that these constraints can't be jointly fulfilled for all samples and thus would need to be relaxed suitably.

\REV{For example, for classification a natural surrogate is to set the targets to a positive constant above the firing threshold for the ground-truth class and to a negative constant for the other classes.}

While this idea is straightforward, it is far from trivial to apply to more complex cases, which is why we defer a thorough investigation to future work.

Alternatively, it has been shown that optimising the linear parameters $\mW, \vb, \vc$ of a single \textit{SRM}-layer for spike-valued outputs can be solved by Poisson Generalised Linear Model regression~\citep{glm1, glm2}. This, however, requires costly iterative optimisation and it is unclear how to fit the temporal parameters $\vtau, \vsigma, \vvarsigma$, which contribute non-linearly to the output, in this case.

\subsection{Output Delays}
\label{app:sswim_o_delays}
The purpose of the delays is to select the interval from the output of the hidden layers best suited for constructing the target function, more specifically the time-varying part, from the \textit{\REV{SRK}}s placed at the spikes inside this interval. Let $\nl{y}{i}{n}$ denote the target output of sample $n$ for neuron $i$, $\nl{\tilde{y}}{i}{n}=\nl{y}{i}{n}-\mu_{\nl{y}{i}{n}}(T, T+H)$ its time-varying part, and $\nl{\s}{j}{n}=\opg{\nl{\Psi}{j}{L}}{\nl{x}{}{n}}$ the output spike train of neuron $j$ in the last hidden layer to the associated input sample.

Assuming the ansatz of \deft{def:ff_snn} for $\nl{\tilde{y}}{i}{n}$ and using the shorthand
\begin{equation}
    \nl{k}{i}{L+1}(t)=\left.\nl{k}{}{L+1}\parenth{\frac{t-\nl{\tau}{i}{L+1}}{\nl{\sigma}{i}{L+1}}}\right\rvert_{t\geq0},
\end{equation}
we set
\begin{equation}
    \nl{y}{i}{n}\stackrel{!}{=}\sum_{j=1}^{N_L}\nl{W}{ij}{L+1}\left[\nl{k}{i}{L+1}*\nl{\s}{j}{n}\right]+\nl{b}{i}{L+1}
\end{equation}
yielding 
\begin{align}
    \nl{\tilde{y}}{i}{n}&=\nl{y}{i}{n}-\frac{1}{H}\int_{T}^{T+H} \sum_{j=1}^{N_L}\nl{W}{ij}{L+1} \left[\nl{k}{i}{L+1}*\nl{\s}{j}{n}\right](t) + \nl{b}{i}{L+1}\d{t} \\
    &=\sum_{j=1}^{N_L}\nl{W}{ij}{L+1}\left[\nl{k}{i}{L+1}*\nl{\s}{j}{n}\right] - \int_{T}^{T+H} \left[\nl{k}{i}{L+1}*\nl{\s}{j}{n}\right](t) \d{t} \\
    &= \sum_{j=1}^{N_L}\nl{W}{ij}{L+1}\left[\parenth{\nl{k}{i}{L+1}-\mu_{\nl{k}{i}{L+1}}(T, T+H)}*\nl{\s}{j}{n}\right]\\
    &:= \sum_{j=1}^{N_L}\nl{W}{ij}{L+1}\left[\nl{\hat{k}}{i}{L+1}*\nl{\s}{j}{n}\right].
\end{align}
While the parameters $\nl{W}{i}{L+1}$ and $\nl{\sigma}{i}{L+1}$ here are unknown, we can get an independent estimate for the delay using correlation analysis. We find that
\begin{align}
    \label{eq:corr_sum}
    \nl{C}{ij}{n}(\tau)&=\bracket{\nl{\s}{j}{n}\star \nl{\tilde{y}}{i}{n}}=\sum_{\nl{t}{j}{n}\in \nl{T}{j}{n}}\nl{\tilde{y}}{i}{n}\parenth{\tau-\nl{t}{j}{n}}\\
    &=\sum_{m=1}^{N_L}\nl{W}{im}{L+1}\sum_{\nl{t}{j}{n}\in \nl{T}{j}{n}}\left[\nl{\hat{k}}{i}{L+1}*\nl{\s}{m}{n}\right]\parenth{\tau-\nl{t}{j}{n}}\\
    &=\sum_{m=1}^{N_L}\nl{W}{im}{L+1}\sum_{\nl{t}{j}{n}\in \nl{T}{j}{n}}\sum_{\nl{t}{m}{n}\in \nl{T}{m}{n}}\nl{\hat{k}}{i}{L+1}\parenth{\tau-\nl{t}{j}{n}-\nl{t}{m}{n}},
\end{align}
which after separating the $m=j\land \nl{t}{j}{n}=\nl{t}{m}{n}$ and $m\neq j\lor \nl{t}{j}{n}\neq\nl{t}{m}{n}$ terms gives
\begin{align}
    \nl{C}{ij}{n}(\tau)&=
    \begin{aligned}[t] 
    &\left|\nl{T}{j}{n}\right|\nl{W}{ij}{L+1}\nl{\hat{k}}{i}{L+1}\parenth{\tau}\\
    &+\sum_{m=1}^{N_L}\nl{W}{im}{L+1}\underbrace{\sum_{\nl{t}{j}{n}\in \nl{T}{j}{n}}\sum_{\substack{\nl{t}{m}{n}\in \nl{T}{m}{n} \\ m\neq j\lor \nl{t}{j}{n}\neq\nl{t}{m}{n}}}\nl{\hat{k}}{i}{L+1}\parenth{\tau-\nl{t}{j}{n}-\nl{t}{m}{n}}}_{\approx H\mu_{\nl{\hat{k}}{i}{L+1}}(T, T+H)=0}
    \end{aligned}
    \\
    &\approx\left|\nl{T}{j}{n}\right|\nl{W}{ij}{L+1}\nl{\hat{k}}{i}{L+1}\parenth{\tau}.
\end{align}
The final approximation assumes that the differences between spikes across trains are distributed roughly uniformly over the interval, which is justified if the spike trains of different neurons are only weakly correlated, which is likely since each neuron in the hidden layers has different temporal parameters. The remaining, or at least dominant, term reveals the unknown delay $\nl{\tau}{i}{L+1}$. Specifically, if $\nl{k}{}{L+1}$ peaks at $\Delta^{k}$, $\nl{\hat{k}}{i}{L+1}\parenth{\tau}$ will peak at $\nl{\tau}{i}{L+1}+\Delta^{k}$, contributing a distinct positive or negative extremum to $\nl{C}{ij}{n}$, depending on the sign of $\nl{W}{ij}{L+1}$. Thus, if we aggregate $\left|\nl{C}{ij}{n}\right|$ across many samples and trains, we accumulate exactly those peaks, meaning
\begin{equation}
    \label{eq:corr_agg}
    \nl{\tau}{i}{L+1}=\argmax_{\tau \in [0, O)} \nl{C}{i}{\text{abs}}(\tau) - \Delta^{k} = \argmax_{\tau \in [0, O)} \sum_{n}\sum_{j=1}^{N_L}\left|\nl{C}{ij}{n}(\tau)\right| - \Delta^{k},
\end{equation}
as long as the above approximation holds for most trains in most samples. The bound is chosen to guaruantee a sufficiently large interval of inputs as previously discussed in \secref{sec:sswim_h_temp}. The correlations can be computed efficiently using \eq{eq:corr_sum}, again utilising spike sparsity, or by employing the Cross-Correlation Theorem, since the transforms $\op{F}{\nl{\s}{j}{n}}$ can be reused across neurons in the output layer. The aggregation over samples $n$ in \eq{eq:corr_agg} could in principle be done over the entire training data, however we found aggregating only over $X_I$ to already yield a robust estimator.

\subsection{Output Kernel Supports}
\label{app:sswim_o_supports}
\paragraph{Delay Aggregation.}
Based on observations from numerical experiments, we make the assumptions that, (a) for moderate values of $H$, the variation between the delays of most neurons is generally smaller than $H$, which is further discussed in \ssect{sec:results}, (b) the larger $\sigma$, the more similar similar values of $\sigma$ perform, i.e. small differences only matter for small values, and (c) $\sigma$ is stationary over time and samples, meaning that if enough samples over a large enough interval are considered, most subbatches will yield similar values. Using this, we pick a representative aggregate of the delay, such as 
\begin{align}
        &\bar{\tau}=\mathop{\mathrm{median}}_{i}\nl{\tau}{i}{L+1} &\text{ or }& &\bar{\tau}=\min_i\nl{\tau}{i}{L+1}
\end{align}
 and a set of appropriately spaced candidates $\{\sigma^c_m\}_{m=1}^{N_{\sigma}^{\text{o}}}$ and simply evaluate how well they perform on the subsets $X_I$ and $Y_I$. The crucial insights making the direct search approach feasible, even cheap, are that aggregating the delays leads to the augmented design matrix in the linear problem being shared by all neurons in the output layer and that, especially after discretisation, only a small set of values needs to be searched. Furthermore, each candidate can be evaluated efficiently, since we only need to compute the residual norm with respect to the best weights for a given $\sigma^c$ but not the actual weights, which we will compute on the full dataset in the final substep.
 \paragraph{Optimal residual.}
Concretely, defining $\nl{\mathcal{D}\vpsi}{j|\sigma^c}{n}\in\real^{H_D}$ to be an arbitrary discretisation of the \textit{\REV{SRK}} contributions from input $j$, evaluated using $\bar{\tau}$ and $\sigma^c$, and $\nl{\mathcal{D}\vy}{i}{n}\in\real^{H_D}$ a respective discretisation of $\nl{\vy}{i}{n}$, the augmented design matrix $\mA(\sigma^c)\in\real^{(H_D\cdot\tilde{M})\times (N_L+1)}$ is given by
\begin{equation}
    \label{eq:sig_design}
    \mA(\sigma^c) = 
    \begin{bmatrix}
\mathbf{1}_{H_D} & \nl{\mathcal{D}\vpsi}{1|\sigma^c}{1} & \nl{\mathcal{D}\vpsi}{2|\sigma^c}{1} & \cdots & \nl{\mathcal{D}\vpsi}{N_L|\sigma^c}{1} \\
\mathbf{1}_{H_D} & \nl{\mathcal{D}\vpsi}{1|\sigma^c}{2} & \nl{\mathcal{D}\vpsi}{2|\sigma^c}{2} & \cdots & \nl{\mathcal{D}\vpsi}{N_L|\sigma^c}{2} \\
\vdots & \vdots & \vdots & \ddots & \vdots \\
\mathbf{1}_{H_D} & \nl{\mathcal{D}\vpsi}{1|\sigma^c}{\tilde{M}} & \nl{\mathcal{D}\vpsi}{2|\sigma^c}{\tilde{M}} & \cdots & \nl{\mathcal{D}\vpsi}{N_L|\sigma^c}{\tilde{M}}
\end{bmatrix},
\end{equation}
where $\mathbf{1}_{H_D}\in\real^{H_D}$ is a vector of ones added to incorporate the bias term. Similarly concatenating $\nl{\mathcal{D}\vy}{i}{n}$ over samples $n$ yields the associated target vector $\mathcal{D}\vy_i$ giving rise to the overdetermined linear problem
\begin{equation}
        \vp^*_{i|\sigma_c}=\argmin_{\vp} \norm{\vr_{i|\sigma_c}}_2^2 = \argmin_\vp \norm{\mA(\sigma^c)\vp-\mathcal{D}\vy_i}_2^2.
\end{equation}
We will discuss the theoretical implications of treating time points like samples in \ssect{app:sswim_o_weights} when detailing how the final weights are solved for. Using the normal equations, the least squares optimal parameters of neuron $i$ are given in closed form by
\begin{equation}
    \label{eq:p_star}
    \vp^*_{i|\sigma_c}=\begin{pmatrix}
        b^*_{i|\sigma_c} & \vw^*_{i|\sigma_c}
    \end{pmatrix}=\parenth{\mA(\sigma^c)^\top \mA(\sigma^c)}^{-1}\mA(\sigma^c)^{\top} \mathcal{D}\vy_i.
\end{equation}
This expression can be simplified by substituting $\mA(\sigma^c)$ by its thin \textit{QR}-factorisation $\mA(\sigma^c)=\mQ_{\sigma^c}\mR_{\sigma^c}$, where $\mQ_{\sigma^c}\in\real^{(H_D\cdot\tilde{M})\times m}$, $m=\opt{rank}\parenth{\mA(\sigma^c)}$, $\mQ_{\sigma^c}^\top \mQ_{\sigma^c}=\textbf{Id}_m$, and $\mR_{\sigma^c}\in\real^{m\times m}$ is invertible. Using these properties, \eq{eq:p_star} becomes
\begin{equation}
    \vp^*_{i|\sigma_c}=\parenth{\mR_{\sigma^c}^\top \mQ_{\sigma^c}^\top \mQ_{\sigma^c} \mR_{\sigma^c}}^{-1}\mR_{\sigma^c}^\top \mQ_{\sigma^c}^\top \mathcal{D}\vy_i=\mR_{\sigma^c}^{-1}\mR_{\sigma^c}^{-\top}\mR_{\sigma^c}^{\top}\mQ_{\sigma^c}^\top \mathcal{D}\vy_i=\mR_{\sigma^c}^{-1}\mQ_{\sigma^c}^\top \mathcal{D}\vy_i.
\end{equation}
We are not interested in $\vp^*_{i|\sigma_c}$ because it is computed for the wrong delay and only on a small subset of the training data. However, by the above assumptions, it is enough to find good (likely not optimal) values for the supports. Thus we only care about the norm of the residual with respect to the optimal weights  $\vr^*_{i|\sigma_c}$. A quick calculation reveals
\begin{align}
    \norm{\vr^*_{i|\sigma_c}}_2^2 &= \norm{\mA(\sigma^c)\vp^*_{i|\sigma_c}-\mathcal{D}\vy_i}_2^2 = \parenth{\mA(\sigma^c)\vp^*_{i|\sigma_c}-\mathcal{D}\vy_i}^\top\parenth{\mA(\sigma^c)\vp^*_{i|\sigma_c}-\mathcal{D}\vy_i} \\
    &={\vp^{*\top}_{i|\sigma_c}} \mA(\sigma^c)^\top \mA(\sigma^c) \vp^*_{i|\sigma_c} - 2 \mathcal{D}\vy_i^\top \mA(\sigma^c)\vp^*_{i|{\sigma_c}} + \mathcal{D}\vy_i^\top\mathcal{D}\vy_i \\
    &=\mathcal{D}\vy_i^\top \mQ_{\sigma^c} \mR_{\sigma^c}^{-\top}   \mR_{\sigma^c}^{\top} \mQ_{\sigma^c}^\top \mQ_{\sigma^c}  \mR_{\sigma^c} \mR_{\sigma^c}^{-1}\mQ_{\sigma^c}^\top \mathcal{D}\vy_i - 2 \mathcal{D}\vy_i^\top \mQ_{\sigma^c}  \mR_{\sigma^c}\mR_{\sigma^c}^{-1}\mQ_{\sigma^c}^\top \mathcal{D}\vy_i + \mathcal{D}\vy_i^\top\mathcal{D}\vy_i\\
    &= \norm{\mathcal{D}\vy_i}_2^2 - \norm{\mQ_{\sigma^c}^\top\mathcal{D}\vy_i}_2^2.
\end{align}
\paragraph{Cost Analysis.}
Thus, evaluating $N_{\sigma_c}^\text{O}$ candidates requires $N_{L+1}$ target vector norms, $N_{\sigma^c}^\text{O}$ matrix assemblies and thin \textit{QR} decompositions, and $N_{L+1}N_{\sigma^c}^\text{O}$ $\mQ$-$\vy$ products and norms. This cost is neglegible compared to the following weight computation since the expensive operations, namely the matrix assemblies and decompositions, only scale in the number of candidates. 
\paragraph{Choice of Candidates.}
Using assumption (b), the number of candidates can be kept small by a proper choice of spacing. Concretely, we propose to use a power-law spacing with exponent $\alpha$ between a small $\sigma_{\min}^\text{O}$ and large $\sigma_{\min}^\text{O}$ with respect to $H$, given by
\begin{equation}
    P_{\alpha}\parenth{a,b, N} \;=\;
    \accolade{
\left( 
  \left(1 - \frac{m-1}{N}\right) 
  a^{1/\alpha}
  \;+\;
  \frac{m-1}{N}
  b^{1/\alpha}
\right)^{\alpha}}_{m=1}^N ,
\end{equation}
i.e. linearly spaced between $a^{1/\alpha}$ and $b^{1/\alpha}$ for a moderate $\alpha\in(1, \infty)$ as a compromise between linear ($\alpha=1$) and logarithmic ($\alpha\rightarrow\infty$) spacing, since linear spacing violates (b) and logarithmic spacing clusters too aggressively around small values.
We suggest using $\sigma_{\min}^\text{O}=1$ and $\sigma_{\max}^\text{O} = 2H$ timesteps, $\alpha=1.5$, and $N_{\sigma^c}^\text{O}=30$ as reasonable defaults, which should require no further tuning for most datasets.
\paragraph{Summary.}
Compactly, the \REV{substep} $\Sigma_{
    \sigma_{\min}^\text{O}, \sigma_{\max}^\text{O}, \alpha, N_{\sigma^c}^\text{O}
}$ is given by 
\boxml{op_sig}{
    \Sigma_{
    \sigma_{\min}^\text{O}, \sigma_{\max}^\text{O}, \alpha, N_{\sigma^c}^\text{O}
}\parenth{\accolade{\opg{\nl{\Psi}{}{L}}{\nl{x}{I}{n}}}_{n=1}^{\tilde{M}}     
    , \accolade{\nl{y}{I}{n}}_{n=1}^{\tilde{M}}} = \\
        \parenth{
            \argmin_{\sigma_c\in P_{\alpha}\parenth{\sigma_{\min}^\text{O}, \sigma_{\max}^\text{O}, N_{\sigma^c}^\text{O}}}
        \norm{
            \begin{pmatrix}
                \mathcal{D} \nl{y}{I,i}{1}\\
                \mathcal{D} \nl{y}{I,i}{2}\\
                \vdots\\
                \mathcal{D} \nl{y}{I,i}{\tilde{M}}\\
            \end{pmatrix}
            }_2^2 - \norm{\mQ_{\sigma^c}^\top
            \begin{pmatrix}
                \mathcal{D} \nl{y}{I,i}{1}\\
                \mathcal{D} \nl{y}{I,i}{2}\\
                \vdots\\
                \mathcal{D} \nl{y}{I,i}{\tilde{M}}\\
            \end{pmatrix}
            }_2^2
    }_{i=1}^{N_{L+1}}
}
with $\mQ_{\sigma^c}\mR_{\sigma^c}=\mA(\sigma^c)$ as in \eq{eq:sig_design} using the same discretisation operator $\mathcal{D}$ as for $y$.

While this approach is fast and works reasonably well, it is far from elegant and constitutes a major avenue for improvement in future iterations of the algorithm.

\subsection{Output Weights}
\label{app:sswim_o_weights}

\paragraph{Derivation of the linear system.}
We will first derive the idea for a general case before applying it to the problem at hand. For the moment consider the general problem over some Hilbert Space $\mathcal{H}$ equipped with the inner product $\langle\cdot, \cdot\rangle_{\mathcal{H}}$ of best approximating a finite set of target vectors $\accolade{\nl{f}{}{n}}_{n=1}^N$ using a real linear combination of a finite set of sample dependent ansatz vectors $\accolade{\accolade{\nl{f}{k}{n}}_{k=1}^K}_{n=1}^N$ with respect to the metric induced by $\langle\cdot, \cdot\rangle_{\mathcal{H}}$ and an $\ell^2$-regularisation term with weight $\lambda$ applied on the norm of the coefficents. Formally, the problem is stated as
\begin{equation}
    \min_{w\in\real^K} \sum_{n=1}^{N} \left\langle \nl{f}{}{n} - \sum_{k=1}^{K}w_k \nl{f}{k}{n}, \nl{f}{}{n} - \sum_{k=1}^{K}w_k \nl{f}{k}{n} \right\rangle_\mathcal{H} + \lambda w^\top w.
\end{equation}
It can easily be verified that this is a strictly convex problem in $w$ for $\lambda>0$ and thus can be approached using first order optimality. Using bilinearity, we get
\begin{align}
    &\pardevF{}{w_k} \sum_{n=1}^{N} \left\langle \nl{f}{}{n} - \sum_{k_1=1}^{K}w_k \nl{f}{k_1}{n}, \nl{f}{}{n} - \sum_{k_1=1}^{K}w_k \nl{f}{k_1}{n} \right\rangle_\mathcal{H} + \lambda w^\top w \\
    &= \pardevF{}{w_k} \sum_{n=1}^{N} \left\langle \nl{f}{}{n}, \nl{f}{}{n} \right\rangle_\mathcal{H} - 2 \sum_{k=1}^{K}\left\langle \nl{f}{k}{n}, \nl{f}{}{n} \right\rangle_\mathcal{H} + \sum_{k_1=1}^{K}\sum_{k_2=1}^{K} \left\langle \nl{f}{k_1}{n}, \nl{f}{k_2}{n} \right\rangle_\mathcal{H} + \lambda w^\top w \\
    &=\sum_{n=1}^{N} -2 \sum_{k=1}^{K}\left\langle \nl{f}{k}{n}, \nl{f}{}{n} \right\rangle_\mathcal{H} + 2 \sum_{k_1=1}^{K}w_{k_1}\left\langle \nl{f}{k_1}{n}, \nl{f}{k}{n} \right\rangle_\mathcal{H} + 2\lambda\sum_{k_1}^{K} w_k = 0\\
    &\iff \sum_{n=1}^{N} \sum_{k_1=1}^{K}w_{k_1}\left\langle \nl{f}{k_1}{n}, \nl{f}{k}{n} \right\rangle_\mathcal{H} + \lambda\sum_{k_1}^{K} w_k = \sum_{n=1}^{N} \sum_{k=1}^{K}\left\langle \nl{f}{k}{n}, \nl{f}{}{n} \right\rangle_\mathcal{H},
\end{align}
which can be written in matrix form using $\nl{F}{ij}{n}=\left\langle\nl{f}{i}{n}, \nl{f}{j}{n} \right\rangle_{\mathcal H}$ as
\begin{equation}
    \label{eq:l2_system}
    Fw=
    \begin{aligned}
        \sum_{n=1}^{N}
    \begin{pmatrix}
\nl{F}{11}{n} + \lambda & \nl{F}{12}{n} & \dots & \nl{F}{1K}{n} \\
\nl{F}{21}{n} & \nl{F}{22}{n} + \lambda & \dots & \nl{F}{2K}{n} \\
\vdots & \vdots & \ddots & \vdots \\
\nl{F}{K1}{n} & \nl{F}{K2}{n} & \dots & \nl{F}{KK}{n} + \lambda
\end{pmatrix}
\begin{pmatrix} w_1 \\ w_2 \\ \vdots \\ w_K \end{pmatrix}
=\sum_{n=1}^{N}
\begin{pmatrix}
\langle \nl{f}{1}{n}, \nl{f}{}{n} \rangle_{\mathcal H} \\
\langle \nl{f}{2}{n}, \nl{f}{}{n} \rangle_{\mathcal H} \\
\vdots \\
\langle \nl{f}{K}{n}, \nl{f}{}{n} \rangle_{\mathcal H}
\end{pmatrix}
\end{aligned}.
\end{equation}
After replacing the vectors $\mathcal{H}\ni \phi \approx\mathcal{D}\phi\in\real^{H_D}$ and inner product $\left\langle f_1, f_2\right\rangle\approx \mathcal{D}f_1^\top \mathcal{D}f_2 $ by discrete approximations, the system becomes 
\begin{equation}
    \label{eq:l2_system_mult}
    \parenth{\mathcal{D}F^\top \mathcal{D}F + n\lambda \text{Id}_K}w=\mathcal{D}F^\top \mathcal{D}f,
\end{equation}
which are exactly the regularised normal equations for the matrix 
\begin{align}
    \label{eq:l2_system_design}
    \mathcal{D}F=
        \begin{bmatrix}
\mathcal{D}\nl{f}{1}{1} & \mathcal{D}\nl{f}{2}{1} & \cdots & \mathcal{D}\nl{f}{K}{1} \\
\mathcal{D}\nl{f}{1}{2} & \mathcal{D}\nl{f}{2}{2} & \cdots & \mathcal{D}\nl{f}{K}{2} \\
\vdots & \vdots & \ddots & \vdots \\
\mathcal{D}\nl{f}{1}{N} & \mathcal{D}\nl{f}{2}{N} & \cdots & \mathcal{D}\nl{f}{K}{N}
\end{bmatrix}\in\real^{N\cdot H_D\times K}
&\text{ and vector }
\mathcal{D}f=
\begin{bmatrix}
\mathcal{D}\nl{f}{}{1}\\
\mathcal{D}\nl{f}{}{2}\\
\vdots\\
\mathcal{D}\nl{f}{}{N}
\end{bmatrix}\in\real^{N\cdot H_D}
\end{align}
stacked as in \eq{eq:sig_design}.

Returning to the problem at hand, this digression provides (a) a justification for treating treating the discretised time points as additional samples (under the assumption of equal weighting) and (b) gives a formulation of the linear problem that can be assembled in batches and requires only a small ($N_{L}+1\times N_{L}+1$) system per neuron to be solved. Concretely, for neuron $i$, we accumulate the sum in \eq{eq:l2_system} in batches by only assembling a moderate amount of rows in $\mathcal{D}F$ in \eq{eq:l2_system_design} before performing the multiplications in \eq{eq:l2_system_mult} on the submatrix and the respective rows of the vector $\mathcal{D}F$ with the ansatz functions $
    \accolade{\accolade{1}\cup\accolade{\opg{\nl{\psi}{ij}{L+1}}{\nl{x}{}{n}}}_{j=1}^{N_L}}_{n=1}^M
$ and target vectors $\accolade{\nl{y}{i}{n}}_{n=1}^{M}$. In the case of spike-valued outputs, $\nl{\eta}{i}{L+1}$ evaluated on the target spike trains would be added as an additional ansatz function to solve for the spike cost. 

\paragraph{Conditioning of the System Matrix.}
We provide a brief estimation of the condition number of the resulting matrix 
\begin{equation}
F=\sum_{n=1}^{M}{\nl{\mathcal{D}F}{}{n}}^\top\nl{\mathcal{D}F}{}{n}+M\lambda \text{Id}_{N_{L}+1}:=F_S + M\lambda\text{Id}_{N_{L}+1}
\end{equation}
summed over the entire training set. Each summand ${\nl{\mathcal{D}F}{}{n}}^\top\nl{\mathcal{D}F}{}{n}+\lambda \text{Id}_{N_{L}+1}$
is symmetric and (strictly) positive definite for $\lambda>0$. An immediate loose bound 
\begin{equation}
    \kappa(F) = \frac{\lambda_{\max}(F)}{\lambda_{\min}(F)} = \frac{M\lambda + \lambda_{\max}(F_S)}{M\lambda + \lambda_{\min}(F_S)}\leq \frac{M\lambda + \sum_{n=1}^M\lambda_{\max}({\nl{\mathcal{D}F}{}{n}}^\top\nl{\mathcal{D}F}{}{n})}{M\lambda}
\end{equation}
is provided by Weyl's inequality, where it can be seen that the regularisation immediately guaruantees finity. Bounding the spectral norm by the Frobenius norm through $\norm{\nl{\mathcal{D}F}{}{n}}_2^2\leq \norm{\nl{\mathcal{D}F}{}{n}}_F^2=\sum_{j=1}^{N_{L}+1}\norm{\nl{\mathcal{D}F}{:j}{n}}^2_2$ and plugging in the specific ansatz functions, we get
\begin{align}
    \kappa(F)&\leq \frac{M\lambda + \sum_{n=1}^M\sum_{j=1}^{N_{L}+1}\norm{\nl{\mathcal{D}F}{:j}{n}}^2_2}
    {M\lambda}\leq1+\frac{H_D}{\lambda}+\frac{\norm{\mathcal{D}\nl{k}{i}{L+1}}_2^2}{M\lambda}\sum_{n=1}^M\sum_{j=1}^{N_{L}}\left|\nl{T}{j}{L}\right|^2,
\end{align}
relating the condition to the spike counts $\left|\nl{T}{j}{L}\right|$, since the design matrix contains a column of $1$s for the bias and each spike contributes (at most) a full copy of the discretised kernel $\mathcal{D}\nl{k}{i}{L+1}$, where $\nl{\mathcal{D}F}{:j}{n}$ is the $j$-th column of $\nl{\mathcal{D}F}{:j}{n}$ and
\begin{equation}
    \nl{k}{i}{L+1}(t)=\left.\nl{k}{}{L+1}\parenth{\frac{t-\nl{\tau}{i}{L+1}}{\nl{\sigma}{i}{L+1}}}\right\rvert_{t\geq0}.
\end{equation}
Since the neurons in the last hidden layer will typically code for different features by having different temporal parameters and the separation criterion, both the Weyl's and Frobenius estimations are rather conservative here. Nonetheless, this bound yields several useful insights. Firstly, considering for example, $\lambda \geq 10^{-3}$, $N_L$ and $\left|\nl{T}{j}{L}\right|^2\in\mathcal{O}(10^2)$ and $\norm{\mathcal{D}\nl{k}{i}{L+1}}_2^2\in \mathcal{O}(10)$, we find that $\kappa(F)\in\mathcal{O}(10^8)$ as a loose upper bound, suggesting that while the system is unlikely to be \textit{catastrophically} ill-conditioned, the possibility of ill-conditioning can not be neglected. Secondly, the benefits of a sparse spiking representation can go beyond mere efficiency, which is intuitive in this case, since more spikes enable more complicated functions to be represented. Finally, the price for "compressing" the time dimension in the linear problem is paid in the condition, both through the direct $H_D$ contribution and the number of spikes, since longer evaluation intervals will typically mean more contributing spikes.

These estimations are in line with the findings of our numerical experiments that the condition rarely caused immediately obvious issues, which is why no preconditioning was applied during the experiments described in the following section. Future work should, however, investigate the relevance of bad conditioning to the algorithm and possible mitigationg strategies.

\paragraph{Regularisation Parameter Search.}
Finally, since the matrix $F_S$ is symmetric and applying the diagonal regularisation changes only the eigenvalues but not the eigenvectors, it is possible to search over a set of candidate regularisation terms with little added effort. Using the pre-assembled system matrix $F_S$ and the target vector $y=\mathcal{D}F^\top \mathcal{D}f$ for the training set and assembling the respective system matrix on a validation subset, computing its symmetric eigendecomposition $F_S=\Gamma \Lambda \Gamma^\top$ allows to cheaply test a candidate regularisation $\lambda^c$ without re-solving and re-assembling the full system by using that the optimal parameters are given as $p^*_{\lambda^c}=\Gamma (\Lambda + \lambda^c \text{Id}_{N_{L}+1})^{-1}\Gamma^\top y$, which can then be evaluated on the validation set by using the pre-assembled system matrix. In \name, we use this strategy and test $32$ logarithmically spaced candidates in $10^{-5}\leq \lambda^c \leq \half$. It should be emphasised that this does not mean computing the weights from the validation set. The specific formulation employed simply allows moving an outer optimisation loop over the regularisation parameter into the solution of the linear problem. The eigendecomposition in this step did not converge in some experiments during the ablation study discussed in \ssect{app:abl_study}. It remains to be answered whether this is related to an ill-conditioned system matrix.
\clearpage
\section{Experimental Settings}
\label{app:exp_sett}
To ensure reproducibility, the full details of the conducted experiments are given in the following.
\subsection{Implementation \& Setup}
We implement \alg{alg:sswim} in the Python framework \textit{CuPy}~\citep{cupy}. All experiments were conducted on a system equipped with an AMD EPYC~7402 
processor (2.80\,GHz, single-socket, 24~cores per socket with 2-way 
hyper-threading), 256\,GiB of system memory, and four NVIDIA RTX~3080 
Turbo GPUs, each with 10\,GiB of video memory. Each Experiment only used a single GPU.
All reported results are averaged across three seeds.

\subsection{\REV{Time-Series Forecasting}}
\subsubsection{Datasets \& Evaluation Metrics}
Following~\cite{tsf_reference}, we choose two short- and two long-term observation length datasets to evaluate the model. The characteristics of each dataset are summarised in \tabl{tab:datasets}.
\begin{table}[H]
\centering
\caption{The statistics of time-series datasets. Reproduced from~\cite[Table 4]{tsf_reference}.}
\label{tab:datasets}
\resizebox{1\linewidth}{!}{
\begin{tabular}{lcccc}
\hline
Dataset & Samples & Variables & Observation Length & Train-Valid-Test Ratio \\
\hline
Metr-la & $34,272$ & $207$ & $12, (\operatorname{short-term})$ & $(0.7, 0.2, 0.1)$ \\
Pems-bay & $52,116$ & $325$ & $12, (\operatorname{short-term})$  & $(0.7, 0.2, 0.1)$ \\
Solar-energy & $52,560$ & $137$ & $168, (\operatorname{long-term})$ & $(0.6, 0.2, 0.2)$  \\
Electricity & $26,304$ & $321$ & $168, (\operatorname{long-term})$ & $(0.6, 0.2, 0.2)$ \\
\hline
\end{tabular}
}
\end{table}
As evaluation metric, we use the Root Relative Squared Error (\textit{RSE})
\begin{equation}
    \text{RSE}=\sqrt{
        \frac
        {\sum_{m=1}^{M}\norm{\nl{\mY}{}{m} - \nl{\hat{\mY}}{}{m}}_F^2}
        {\sum_{m=1}^{M}\norm{\nl{\mY}{}{m} - \bar{\mY}}_F^2}}.    
\end{equation}
Here, $\real^{C\times L}\ni\nl{\mathbf{Y}}{}{m}=\parenth{\nl{Y}{c, l}{m}}_{c=1,\dots C; l=1,\dots L}$, $\nl{\emY}{c, l}{m}$ denotes the $l$-th target value of the $c$-th variable in the $m$-th sample of the evaluation set and $\nl{\hat{Y}}{c, l}{m}$ the respective model prediction. $\bar{\accolade{\cdot}}$ denotes averaging over all samples of the evaluation set. All datasets were normalised to lie in the range $[0,1]$ for all experiments.

\subsubsection{Surrogate-gradient training}
As reference, the same models were trained with \textit{SGD} using a surrogate-gradient strategy. Specifically, we used the Lava~DL~\citep{lava_dl} implementation of the \textit{SLAYER}~\citep{slayer} algorithm. \textit{SLAYER} was chosen because it was built for the same delay-parameterisation used in \deft{def:ff_snn}. 
Optimisation was performed with respect to \textbf{all} parameters in the proposed model.

To initialise weights for \textit{SGD}-only training, we use an adaptation of the strategy proposed in \cite{fluct_init} to \textit{SRM} neuron with $\mu=0.5$ and $\xi=1$. The biases were either initialised to zero if the weight distribution was well defined, or by $b=2\parenth{\parenth{\frac{\xi\sqrt{\hat{\epsilon}}}{\epsilon\sqrt{n\nu}}+1}^{-1}-\mu}$ in the notation of \cite{fluct_init} to enforce a strictly positive standard deviation, otherwise. The mean input $\nu$ was computed over a subbatch of $1000$ samples.
The hidden layer delays were initialised by $\tau\sim \text{Uniform}(0, 15)$, the kernel supports by $\sigma\sim \text{Uniform}(5, 15)$ timesteps. Spike costs were initialised to $c=1$. The output layer delays were initialised as $\tau=0$, whereas the \textit{\REV{SRK}} supports were set to $\sigma=15$ timesteps.
Mean square error (\textit{MSE}) was used as loss function with regularisation on the $2$-norm of the output layers' weights weighted by $\lambda_\text{reg}$. 

\subsubsection{Model Architecture \& Hyperparameters}
The model was composed of one spiking hidden layer with $750$ neurons and one non-spiking output layer whose voltage was taken as the model prediction ($L=1$ in \deft{def:ff_snn}). For the hidden layer, either a hat function 
\begin{equation}
    \text{Hat}(x) = \max(1 - |x|, 0)
\end{equation}
or a rectified (rescaled) Morlet
\begin{equation}
    \text{Morlet}(x) = \begin{cases}
        \exp\parenth{-3x^2}\cos\parenth{2\pi x} & \text{ for } |x| \leq 1,\\
        0 &\text{otherwise}
    \end{cases}
\end{equation}
was used as \textit{\REV{SRK}}, while the \textit{RfK} of the hidden layer was always a rectified decaying exponential
\begin{equation}
    q(x) = \begin{cases}
        \exp\parenth{-x} & \text{ for } |x| \leq 1,\\
        0 &\text{otherwise.}
    \end{cases}
\end{equation}
The rectification was applied to simplify the computation of convolutions.

The hyperparameters used for \textit{SGD}-only training and fine-tuning are given in \tabl{tab:sgd_params}.
The hyperparameters used in the \name algorithm are given in \tabl{tab:sswim_params}.

\begin{table}
    \centering
        \caption{
        Hyperparameters of gradient-based training for \textit{SGD}-only case and the fine-tuning starting from a \name trained network (Time-Series Forecasting).
    }
    \begin{tabular}{|c|cc|}
    \hline
       Parameter & Value for \textit{SGD}-only & Value for fine-tuning \\\hline\hline
       Batch Size & $64$ & $64$ \\\hline
        Learning Rate Schedule & Cosine schedule  & Cosine schedule \\\hline
        Initial Learning Rate & $5\cdot 10^{-4}$ & $2.5\cdot 10^{-4}$ \\\hline
Final Learning Rate & $1\cdot 10^{-5}$ & $1\cdot 10^{-5}$ \\\hline
Maximum Nr. of Epochs & $750$ & $250$ \\\hline
$\lambda_\text{reg}$ & $10^{-4}$ & From \name \\\hline
Early Stopping Patience & $30$ & $30$ \\\hline
Early Stopping Min. Improvment & $10^{-6}$ & $10^{-6}$ \\\hline

    \end{tabular}
    \label{tab:sgd_params}
\end{table}

\begin{table}
    \centering

    \caption{
        Hyperparameters for \name training (Time-Series Forecasting).
    }
    \begin{tabular}{|c|cc|}
    \hline
       Substep & Parameter & Value \\\hline\hline
       \multirow{1}{*}{Full Algorithm} 
  & $\tilde{M}$ & $1000$ \\ \hline
         \multirow{4}{*}{Temporal Parameters $\op{T}$} 
         & $\tau_{\max}$ &
         $\begin{cases}
             H &\text{ if } H > O, \\
             O/2 &\text{ otherwise}
         \end{cases}$ \\ \cline{2-3}
  & $\sigma_{\min}^h$ & $5$ \\ \cline{2-3}
  & $\sigma_{\max}^h$ & $50$ \\ \cline{2-3}
  & $N_\sigma^h$ & $10$ \\ \hline
         \multirow{1}{*}{Sampling Distribution} 
  & Metrics & By Entropy \\ \hline
         \multirow{1}{*}{Sampled Weights $\op{W}$} 
  & Criterion & Dot \\ \hline
           \multirow{3}{*}{Normaliser $\mathcal{Z}$} 
  & Criterion & \textit{MS} \\ \cline{2-3}
  & $\mu_t$ & $0.5$ \\ \cline{2-3}
  & $s_t$ & $0.5$ \\ \hline
\end{tabular}
    \label{tab:sswim_params}
\end{table}

\REV{
\subsection{Classification}
 \label{app:exp_sett_class}
\subsubsection{Datasets \& Pre-processing}
To show experiments on both spiking and non-spiking classification datasets, we choose two static image datasets (MNSIT and \textit{F-MNIST}) and the well known speech recognition Spiking Heidelberg Digits (\textit{SHD}). The datasets are summarised in \tabl{tab:datasets_class}.\\
\paragraph{Images: }To transform images into spike trains, we follow \cite{snn_elm} and sample a number of spikes for each pixel in the flattened images from a Poisson distribution with rate
\begin{equation}
    \nl{\lambda}{i}{n} = \alpha \cdot O \cdot \frac{\nl{p}{i}{n}-p_{\min}}{p_{\max}-p_{\min}},
\end{equation}
where $p_{\min}\leq\nl{p}{i}{n}\leq p_{\max}$ is the intensity of pixel $i$ in the flattened image $n$ and $p_{\min},p_{\max}$ respectively denote the minimum and maximum intensity of the encoding and $\alpha\in(0,1)$. Those spikes are then spread uniformly over the interval $[0, O)$ and presented as the input to the network. For our experiments, we set $\alpha=0.2$, $O=100$ and evaluate the model with a timestep of $1$. Thus, black pixels correspond to (in expectation) no spikes, whereas white pixels correspond to one spike every $5$ time steps.\\
\paragraph{\textit{SHD}: } The \textit{SHD} test set - by design - contains speakers that were not present in the training set, so there is a significant distribution shift~(\cite{shd}). To counteract this, other works apply substantial pre-processing such as filtering and binning spikes and merging channels~(see e.g. \citet{shd_pre}) effectively undoing the distribution shift by removing high-frequency information. Since the purpose of this study is to show the applicability of our method rather than achieve state-of-the-art performance, we only perform the channel merging as in~\citet{shd}, as this also reduces the memory footprint of the dataset. The spike-times are not binned but fed into the model with their exact times. Rather than choosing the timestep as $0.01$, we rescale all spike times by a factor of $100$ and evaluate with a timestep of $1$, respectively setting $O=140$, as the latest spike in the dataset occurs around $1.35$ seconds before rescaling.
}
\begin{table}
\centering
\caption{Overview of classification datasets. Non-spiking datasets are transformed to a spiking representation for the experiments. The original $700$ input channels of \textit{SHD} are reduced to $70$ by merging neighbouring channels.}
\label{tab:datasets_class}
\resizebox{1\linewidth}{!}{

\begin{tabular}{|l|c|c|c|c|c|}
\hline
Dataset & Source & Input Variables & Classes & Spiking & Train/Valid/Test Samples \\
\hline
MNIST & \cite{mnist} & $784$ & $10$ & No & $50$K/$10$K/$10$K \\
Fashion-MNIST (F-MNIST) &\cite{f-mnist} & $784$ & $10$ & No & $50$K/$10$K/$10$K \\
SHD & \cite{shd} & $70\;(700)$ & $20$ & Yes & $8156$/-/$2264$ \\
\hline
\end{tabular}
}
\end{table}
\REV{
\subsubsection{Readout \& Surrogate Voltage}
\label{app:readout}
As usual, we set the number of neurons in the last layer to the number of classes. The predicted class $\hat{K}$ is then 
\begin{equation}
    \hat{K} = \argmax_i \int_0^{T+H} \opg{\nl{\Psi}{i}{L+1}}{\vx}{t} \d{t},
  \end{equation}
  which is the most positive voltage or the number of spikes depending on how the final layer is set up (cf. \deft{def:ff_snn} and Remark \ref{rem:ff_snn}). This motivates the construction of the surrogate voltage used as target for the regression problem by
    \begin{equation}
        y^{surr}_i(t)=
        \begin{cases}
            (1+\Delta)\theta &\text{ for } i=K, \\
            -(1+\Delta)\theta &\text{ otherwise,}
        \end{cases}
    \end{equation}
    where $K$ is the ground-truth class and $\Delta > 0$. This directly enforces the most positive voltage and is a reasonable stand-in objective for the most spikes. The exact choice of $\Delta$ matters little as multiplicative constants are directly absorbed into the least-squares solution (cf. \secref{app:sswim_o_weights}). In our experiments, we set $\Delta=1$ for simplicity. The prediction horizon $H$ was set to $15$ time steps for all experiments. As the employed criterion is invariant (up to finite-window effects) to temporal shifts, we do not fit the delays of the last layer, but simply set them to $0$.\\
    It should be emphasised that there is no good reason to perform the readout on static classification problems using spikes. The discussion here solely serves to illustrate the idea of surrogate voltages.
\subsubsection{Network Architecture \& Hyperparameters}
We test models with one or two spiking hidden layers ($L\in\{1,2\}$ in \deft{def:ff_snn}) and a varying number of neurons in the hidden layers, followed by a spiking layer from which we compute the predicted class either by integrating the total voltage or counting the number of spikes (cf. \secref{app:readout}). For the first hidden layer, a hat function 
\begin{equation}
    \text{Hat}(x) = \max(1 - |x|, 0)
\end{equation}
was used for the image benchmarks and a rectified Morlet
\begin{equation}
    \text{Morlet}(x) = \begin{cases}
        \exp\parenth{-x^2}\cos\parenth{2\pi x} & \text{ for } |x| \leq 1,\\
        0 &\text{otherwise}
    \end{cases}
\end{equation}
was used as \textit{\REV{SRK}} for \textit{SHD} and $\text{Hat}$ for all following layers in both cases. The \textit{RfK} of all layers was always a rectified decaying exponential
\begin{equation}
    q(x) = \begin{cases}
        \exp\parenth{-x} & \text{ for } |x| \leq 1,\\
        0 &\text{otherwise.}
    \end{cases}
\end{equation}
The rectification was applied to simplify the computation of convolutions.\\
For sampling the distance in the output space was defined as $0$ within classes and $1$ between classes.
We evaluate a small set of architecture configurations and hyperparameters for both benchmarks, which are found in \tabl{tab:class_config}. Random weights were sampled from a standard normal distribution; only $\mathcal{W}$ changes, all other steps of the algorithm are still executed as stated.
}
\begin{table}
    \centering

    \caption{
        Architecture and Hyperparameters for the Classification Experiments.
    }
    \begin{tabular}{|c|cc|}
    \hline
       Component & Parameter & Tested Values \\\hline\hline
       \multirow{2}{*}{Network Architecture} 
         & $N_l$ (all hidden layers)  &
         $\{256, 512, 1000\}$ \\ \cline{2-3}
  & $L$ & $\{1,2\}$ \\ \hline
       \multirow{1}{*}{Full Algorithm} 
  & $\tilde{M}$ & $900$ \\ \hline
         \multirow{4}{*}{Temporal Parameters $\op{T}$} 
         & $\tau_{\max}$ &
         $30$ \\ \cline{2-3}
  & $\sigma_{\min}^h$ & $3$ \\ \cline{2-3}
  & $\sigma_{\max}^h$ & $25$ \\ \cline{2-3}
  & $N_\sigma^h$ & $10$ \\ \hline
         \multirow{3}{*}{Sampling Distribution} 
  & $d_\mathcal{Y}$ & Class Distance \\ \cline{2-3}
  & $(d_{\mathcal{X}_0}, d_{\mathcal{X}_l})$ & \{($L^2$, $L^2$), \\
  & & ($\text{mag}, \angle$)\} \\
  \hline
         \multirow{1}{*}{Sampled Weights $\op{W}$} 
  & Criterion & \{Dot, Random\} \\ \hline
           \multirow{3}{*}{Normaliser $\mathcal{Z}$} 
  & Criterion & \textit{MS} \\ \cline{2-3}
  & $\mu_t$ & $0.5$ \\ \cline{2-3}
  & $s_t$ & $0.5$ \\ \hline
\end{tabular}
    \label{tab:class_config}
\end{table}
\clearpage
\section{Further Experiments}
\label{app:exp_further}
We investigate the robustness of individual components through further numerical experiments.
\subsection{Classification with spiking readout}
\label{app:class_s}
\REV{Here, we discuss the results of evaluating the predicted class based on the spikes output by the final layer. It should be emphasized that the network architecture and training method are unchanged only the readout is different. This is more of an illustrative example rather than a practical application of this idea. The results are shown in \figref{fig:class_s}. We find that the idea of surrogate voltages does work as the performance is consistently above chance level and shows the same patterns as for the voltage readout. However, the performance is significantly lower than for voltages. Thus, perhaps more carefully chosen surrogates are needed. However, it is also likely that measures such as normalisation similar to the hidden layers will counteract this performance drop greatly, which we however did not look into here.}
\insertpdf{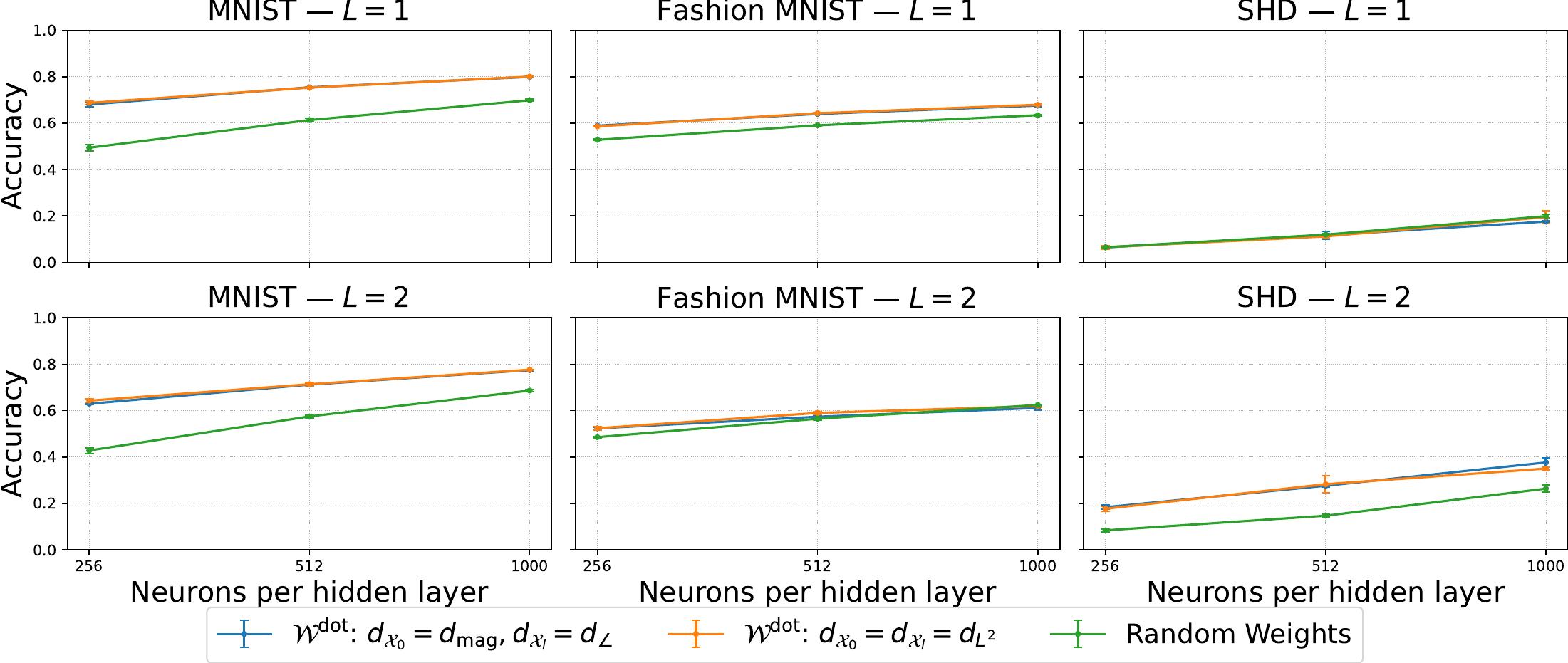}{Classification accuracy using spiking readout.}{fig:class_s}{1}
\subsection{Ablation Study}
\label{app:abl_study}
Concretely, we try to answer the questions of whether (a) the proposed weight construction criteria perform better than random sampling and (b) the neurons in the hidden layer are tuned to different features. To this end, we evaluate the performance of \name trained networks using $\mathcal{W}^\text{dist}$, $\mathcal{W}^\text{dot}$, or drawing weights from a standard normal distribution across different dimensionality of the hidden layer on one short (\textbf{Pems-bay}) and one long (\textbf{Electricity}) observation length dataset. For comprehensiveness, we perform the experiments with different normalisations on the hidden layer weights.

The results are shown in \fig{fig:abl_study}. Regarding (a), we find that $\mathcal{W}^\text{dot}$ consistently and significantly outperforms $\mathcal{W}^\text{dist}$ and random weights, whereas $\mathcal{W}^\text{dist}$ sometimes performs better, sometimes worse than random sampling, depending on the dataset and normaliser. Regarding (b), we find that across all combinations, the performance increases with increasing number of neurons, which shows that neurons do indeed encode different features. Regarding normalisers, we note that $\mathcal{N}^\text{MS}$ is seemingly more stable across parametes than $\mathcal{N}^\text{Fl}$.

We assume that, since there is no interpretable pattern in the out-of-bounds values regarding objectives, normalisers, or number of neurons, it must be caused by numerical issues likely linked to the condition of the linear system.

\begin{figure}[ht]
    \centering
    % First subfigure
    \begin{subfigure}{1\textwidth}
        \centering
        \includegraphics[width=\textwidth]{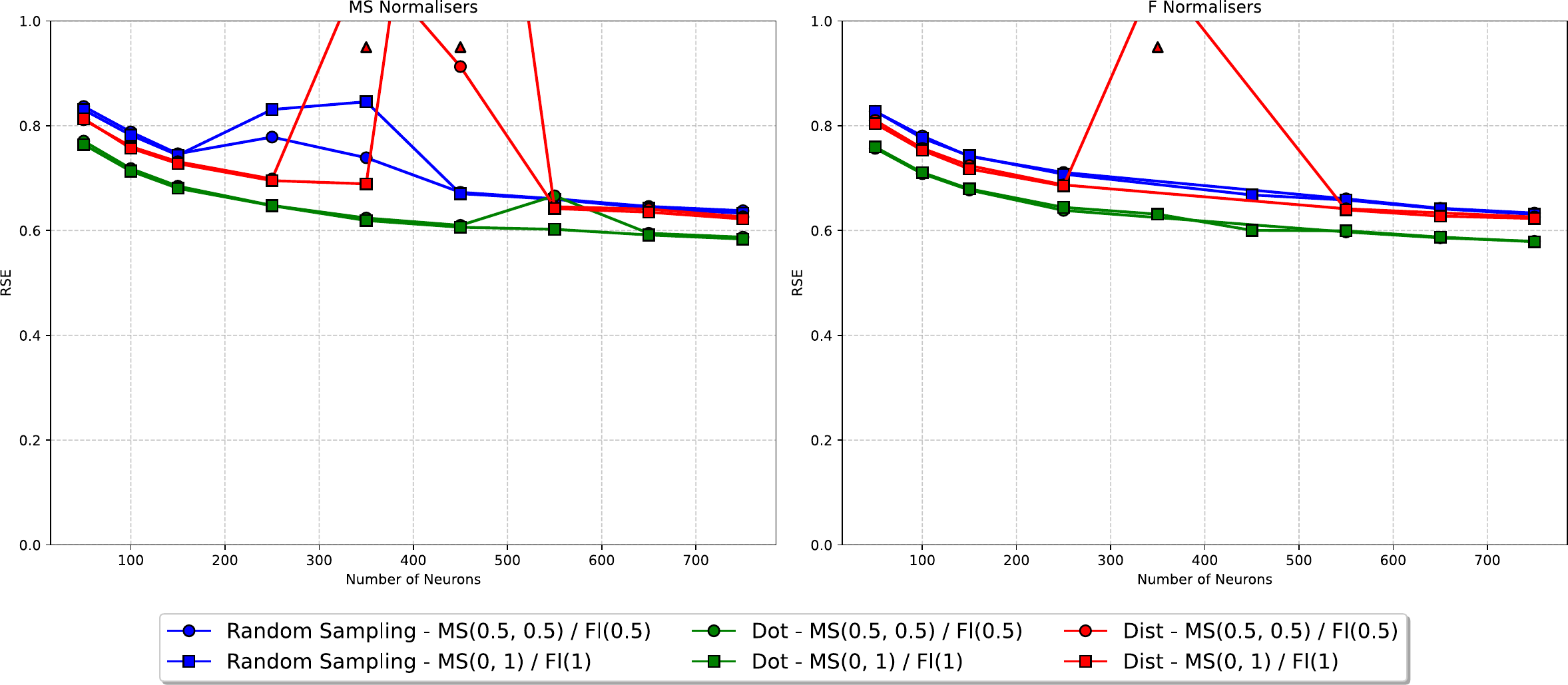}
        \caption{Results on the \textbf{Pems-bay} dataset.}
        \label{fig:alb_pems}
    \end{subfigure}
    \hfill
    % Second subfigure
    \begin{subfigure}{1\textwidth}
        \centering
        \includegraphics[width=\textwidth]{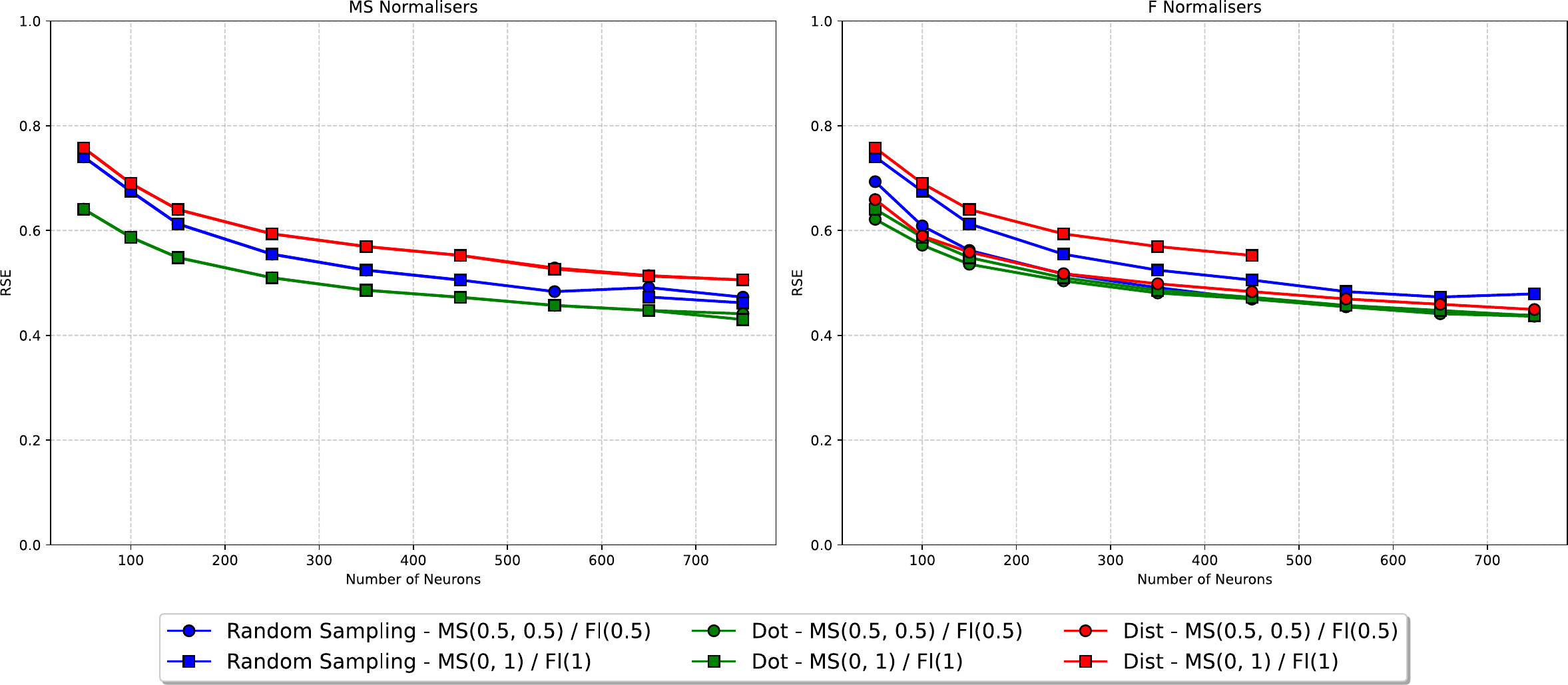}
        \caption{Results on the \textbf{Electricity} dataset.}
        \label{fig:alb_elect}
    \end{subfigure}
    
    \caption{Results of the ablation study. Arrows indicate significantly underperforming models. Missing results indicate errors during the computation.}
    \label{fig:abl_study}
\end{figure}

\subsection{Entropy Criterion}
\label{app:entropy}
Finally, we test how well the proposed metric selection criterion performs compared to prescribing a given pair. We evaluate the performance on all datasets for $H\in\{6, 24, 48\}$ when using the same pseudometric for input and output for each of the proposed feature embeddings, against using the pair that minimises the entropy. A visualisation of the results is given in \fig{fig:entropy}. Firstly, we note that the influence of the used embeddings is, in general, not very large. Concretely, the difference is limited to $0.01-0.02$ points of $\text{RSE}$ score, which is not insignifant but also not major. This indicates that none of the proposed pseudometrics performs significantly better or worse than the others when using them for both in- and output. It remains to be answered whether a different construction for the sampling metrics would lead to significant improvements. Furthermore, if there is a pair of metrics performing significantly better, the entropy criterion does not select it. Across the given datasets, using the high-frequency band for short horizons and cosine distance for long horizons almost always outperforms the entropy criterion.

In summary, future work should evaluate other constructions than the proposed for evaluating the sampling distribution and derive more robust metric selection criteria.

\insertpdf{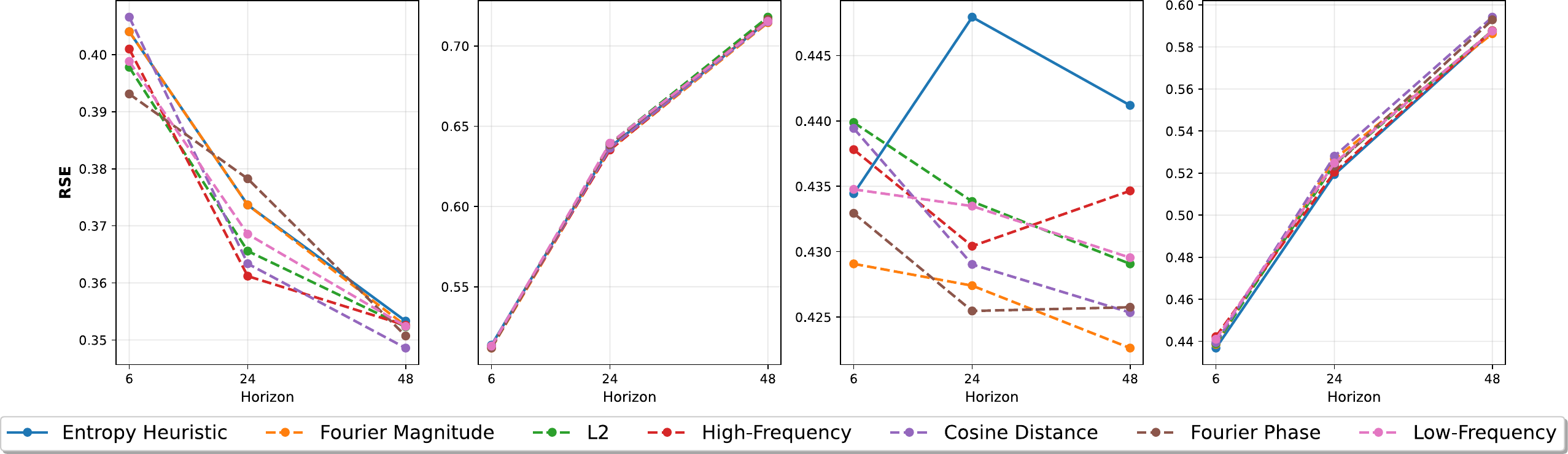}{Prediction performance across datasets for different sampling metrics.}{fig:entropy}{1}
\clearpage
\section{Limitations \& Future Work}
\label{app:limitations}
\paragraph{Limitations}
While the performed experiments validate the overall design of the algorithm and show that, given further refinement, \name can become a strong alternative to gradient-based methods, several limitations and much room for improvement remain. Several important parameters are currently still chosen by data-agnostic heuristics. The properties of the sampling distribution and the question of what characteristics of the data are (most) important for sampling require more thorough analysis. Furthermore, only a very limited set of network architectures and hyperparameters was evaluated. Additionally, more research into the numerical stability of the algorithm is needed. Finally, the algorithm was only evaluated on one type of task and currently requires an explicitly prescribed target voltage for the output layer, limiting its applicability to spike-valued datasets.
\paragraph{Future Work}
We propose the following four points as the major focus of future research. Firstly, more work on sampling the temporal parameters in the hidden layers, especially the kernel parameters, is needed. Future research should look into relating them to characteristic temporal properties of the data, such as autocorrelations or pronounced frequency bands. Secondly, a comprehensive evaluation of possible sampling metrics is needed, especially for sampling multi-layered networks. Thirdly, further work should investigate alternatives to the current search-based approach for the kernel supports in the output layer. We hypothesise that a similarly elegant "identification" approach as for the delays can be constructed. Finally, a method for constructing suitable surrogates in the absence of explicit target voltages is needed. A good start here will likely be deriving the specific set of conditions that make a voltage trace well-approximable by the final layer given the initialisation of the hidden layers. 

\end{document}